\def\eqref#1{equation~\ref{#1}}
\def\1{\bm{1}}
\DeclareMathAlphabet{\mathsfit}{\encodingdefault}{\sfdefault}{m}{sl}
\SetMathAlphabet{\mathsfit}{bold}{\encodingdefault}{\sfdefault}{bx}{n}
\definecolor{codegreen}{rgb}{0,0.6,0}
\definecolor{codegray}{rgb}{0.5,0.5,0.5}
\definecolor{codepurple}{rgb}{0.58,0,0.82}
\definecolor{backcolour}{rgb}{0.97,0.97,0.97}
\definecolor{framecolor}{rgb}{0.8,0.8,0.8}
\lstdefinestyle{pythonstyle}{
	backgroundcolor=\color{backcolour},   
	commentstyle=\color{codegreen}\itshape,
	keywordstyle=\color{blue}\bfseries,
	numberstyle=\tiny\color{codegray},
	stringstyle=\color{codepurple},
	basicstyle=\ttfamily\scriptsize,
	breakatwhitespace=false,         
	breaklines=true,                 
	captionpos=b,                    
	keepspaces=true,                 
	numbers=left,                    
	numbersep=8pt,                  
	showspaces=false,                
	showstringspaces=false,
	showtabs=false,                  
	tabsize=4,
	frame=single,
	rulecolor=\color{framecolor},
	xleftmargin=1.5em,
	framexleftmargin=1.2em,
	columns=flexible,
	aboveskip=1.2em,
	belowskip=1.2em,
	lineskip=0.1em,
	escapeinside={(*@}{@*)},
}
\lstdefinelanguage{Python}{
	keywords={def, class, if, else, elif, for, while, return, import, from, as, try, except, finally, with, lambda, True, False, None, and, or, not, in, is, assert, break, continue, pass, global, nonlocal, yield},
	keywordstyle=\color{blue}\bfseries,
	sensitive=true,
	comment=[l]{\#},
	commentstyle=\color{codegreen}\itshape,
	string=[s]{"}{"},
	stringstyle=\color{codepurple},
	showstringspaces=false,
	morecomment=[s]{"""}{"""},
	morestring=[s]{'}{'},
}
\providecommand{\wp}{\ensuremath{\mathcal{P}}}
\newtheorem{theorem}{Theorem}[section]  
\newtheorem{definition}[theorem]{Definition}
\newtheorem{lemma}[theorem]{Lemma}
\title{Beyond Flattening: A Geometrically Principled Positional Encoding for Vision Transformers with Weierstrass Elliptic Functions}
\author{
	Zhihang Xin \\
	School of Science, Jiangnan University \\
	\texttt{1131230221@stu.jiangnan.edu.cn}
	\And
	Xitong Hu \\
	School of Science, Jiangnan University \\
	\texttt{1131230110@stu.jiangnan.edu.cn}
	\And
	Rui Wang\thanks{Corresponding author.} \\
	School of Artificial Intelligence and Computer Science \\
	Jiangnan University \\
	\texttt{cs\_wr@jiangnan.edu.cn}
}
\begin{document}
	
	\maketitle

\begin{abstract}
Vision Transformers have demonstrated remarkable success in computer vision tasks, yet their reliance on learnable one-dimensional positional embeddings fundamentally disrupts the inherent two-dimensional spatial structure of images through patch flattening procedures. Traditional positional encoding approaches lack geometric constraints and fail to establish monotonic correspondence between Euclidean spatial distances and sequential index distances, thereby limiting the model's capacity to leverage spatial proximity priors effectively. We propose Weierstrass Elliptic Function Positional Encoding (WEF-PE), a mathematically principled approach that directly addresses two-dimensional coordinates through natural complex domain representation, where the doubly periodic properties of elliptic functions align remarkably with translational invariance patterns commonly observed in visual data. Our method exploits the non-linear geometric nature of elliptic functions to encode spatial distance relationships naturally, while the algebraic addition formula enables direct derivation of relative positional information between arbitrary patch pairs from their absolute encodings. Comprehensive experiments demonstrate that WEF-PE achieves superior performance across diverse scenarios, including 63.78\% accuracy on CIFAR-100 from-scratch training with ViT-Tiny architecture, 93.28\% on CIFAR-100 fine-tuning with ViT-Base, and consistent improvements on VTAB-1k benchmark tasks. Theoretical analysis confirms the distance-decay property through rigorous mathematical proof, while attention visualization reveals enhanced geometric inductive bias and more coherent semantic focus compared to conventional approaches.The source code implementing the methods described in this paper is publicly available on GitHub.
\end{abstract}

\section{INTRODUCTION}
\label{sec:introduction}

Vision Transformers (ViTs)(\cite{dosovitskiy2021image}) have emerged as a dominant architecture in computer vision, fundamentally challenging the long-standing prevalence of Convolutional Neural Networks (CNNs)(\cite{lecun1998gradient}). By treating an image as a sequence of patches, ViTs leverage the self-attention mechanism to capture global dependencies, a stark contrast to the localized inductive biases inherent in CNNs. However, this flexibility comes at a cost: ViTs lack an intrinsic understanding of spatial geometry. Consequently, their performance is critically dependent on an external mechanism to supply positional information---a role fulfilled by positional encodings (PE).

The standard approach in ViTs employs simple, learnable 1D positional embeddings. This design, however, suffers from a fundamental flaw: it necessitates the "flattening" of the 2D patch grid into a 1D sequence. This process irrevocably disrupts the image's intrinsic spatial structure. For instance, the sequential distance between vertically adjacent patches becomes artificially inflated compared to horizontally adjacent ones. More critically, these embeddings function as a mere lookup table, devoid of any geometric constraints. There is no guaranteed monotonic relationship between the true Euclidean distance of two patches in the image and the distance between their representations in the embedding space. This deficiency severely limits the model's ability to leverage spatial proximity priors, a cornerstone of effective visual understanding.

To address these fundamental limitations, we introduce the Weierstrass Elliptic Function Positional Encoding (WEF-PE), a mathematically principled framework grounded in the rich theory of complex analysis. Instead of flattening the image, we map the 2D coordinates of each patch directly to the complex plane, preserving their geometric integrity. We then utilize the Weierstrass elliptic function, $\wp(z)$, a doubly periodic meromorphic function, to generate a continuous and structured representation of space. This approach is not an arbitrary choice; the core properties of the elliptic function are remarkably well-suited for visual data. Its double periodicity naturally aligns with the translational patterns found in images, while its continuity provides inherent resolution invariance---a critical advantage for fine-tuning at different image sizes. Furthermore, we demonstrate that our method possesses a theoretically guaranteed distance-decay property and that the function's algebraic addition formula allows for the direct derivation of relative positional information.

This paper presents the comprehensive development, theoretical grounding, and empirical validation of WEF-PE. By replacing a heuristic design choice with a principled mathematical construct, we endow the Vision Transformer with a robust geometric inductive bias. Our main contributions and innovations are:
\begin{enumerate}
	\item We propose a novel positional encoding framework, WEF-PE, based on the Weierstrass elliptic function. It preserves the 2D spatial structure of images by mapping coordinates to the complex plane and provides a continuous, doubly periodic representation that is inherently resolution-invariant.
	\item We provide a rigorous theoretical analysis of our method, including a formal proof of its distance-decay property, which guarantees that spatial proximity is faithfully represented. We also show how the function's algebraic addition formula endows the model with direct relative position awareness.
	\item We conduct extensive experiments, demonstrating that WEF-PE significantly outperforms conventional methods in both from-scratch training and challenging fine-tuning scenarios on benchmarks like CIFAR-100 and VTAB-1k. Our qualitative analyses further reveal that WEF-PE provides a superior geometric inductive bias.
\end{enumerate}

\section{Preliminaries}
\label{gen_inst}

The Weierstrass elliptic function(\cite{weierstrass1854zur}), denoted by $\wp(z)$, is a fundamental example of a doubly periodic meromorphic function in complex analysis that establishes a connection between a complex torus (defined by a lattice) and an algebraic elliptic curve through a specific differential equation. The Weierstrass elliptic function possesses several key mathematical properties, such as being a doubly periodic meromorphic function (as detailed in \cref{def:elliptic_function}), satisfying a specific differential equation (\cref{thm:weierstrass_ode}), and adhering to a unique addition formula (\cref{thm:addition_formula}). These inherent mathematical properties render it a natural vehicle for processing two-dimensional images, thereby overcoming the limitations of conventional one-dimensional positional encodings. We provide a more comprehensive presentation of preliminaries and proofs on this topic in the Appendix(\cref{sec:supplementary_background}).

\begin{figure}[h!]
	\centering 
	
	\begin{minipage}{0.5\textwidth}
		\centering
		\includegraphics[width=0.95\linewidth]{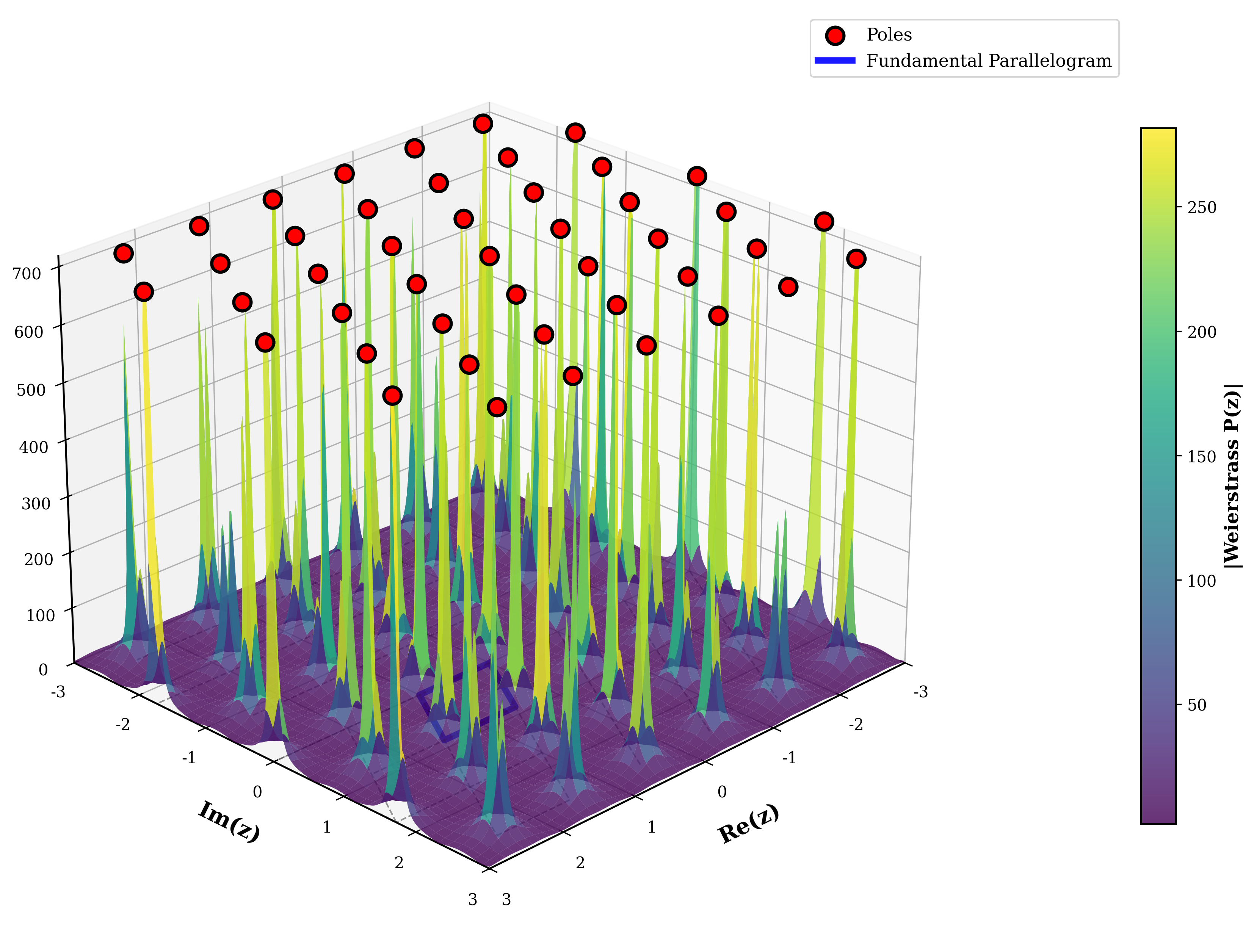}
	\end{minipage}%
	\hfill 
	\begin{minipage}{0.45\textwidth}
		\caption{Three-dimensional visualization of the Weierstrass elliptic function $\wp(z)$ demonstrating the doubly periodic structure and pole distribution across the complex plane, with fundamental parallelogram boundaries marked by red dots indicating lattice points.}
		\label{fig:elliptic_function}
	\end{minipage}
	
\end{figure}

\section{Methodology: Weierstrass Elliptic Function Positional Encoding}
\label{headings}

\subsection{Foundational Framework of WEF-PE}
\subsubsection{Principle of Doubly-Periodic Spatial Representation}

Traditional position encoding methods primarily rely on trigonometric functions or learnable parameters, making it challenging to fully express complex spatial relationships in images. The Weierstrass elliptic function, as a doubly periodic function, naturally possesses the capability to handle 2D spatial structures, while its nonlinear characteristics enable the capture of richer positional information.

\subsubsection{Coordinate System and Complex Plane Mapping}

Input images have varying resolutions, which leads to a variable number of image patches. Consequently, the direct use of absolute indices is problematic, as their valid range changes with the total number of patches. In contrast, the relative concept of position remains consistent within a normalized coordinate system.

Given an input image, we conventionally partition it into a grid of $H \times W$ patches.For patch coordinates $(i,j)$ of the input image, where $i \in \{0,1,\ldots,H-1\}, j \in \{0,1,\ldots,W-1\}$, we first normalize them to the $[0,1]$ interval:
\begin{equation}
	u = \frac{j + 0.5}{W}, \qquad v = \frac{i + 0.5}{H} \label{eq:uv_norm}
\end{equation}

The effective ranges for $u$ and $v$ are 
$\left(\frac{0.5}{W}, \frac{W-0.5}{W}\right)$ 
and 
$\left(\frac{0.5}{H}, \frac{H-0.5}{H}\right)$, 
which are proper subsets of $[0, 1]$. This is advantageous in certain situations. For example, $z=0$ is a pole in the mapping, and we want to avoid having the center of any patch located at a pole.

Subsequently, the normalized coordinates are mapped to the complex plane:
\begin{equation}
	z = \alpha_u \cdot u \cdot 2\text{Re}(\omega_1) + i \cdot \alpha_v \cdot v \cdot 2\text{Im}(\omega_3)
	\label{eq:complex_mapping}
\end{equation}
where $\alpha_u, \alpha_v$ are adjustable scaling factors, and $\omega_1, \omega_3$ represent the real half-period and imaginary half-period of the elliptic function, respectively.This is a pivotal step, as it embeds the rich geometric and analytic properties of the Weierstrass elliptic function into the spatial location of each image patch. In the process of positional encoding, the elliptic function is like "weaving a fishing net" within the image, truly coupling the two-dimensional information of the image together. 

\subsubsection{Feature Extraction from the Elliptic Function and its Derivative}

For the mapped complex number $z$, we compute the Weierstrass elliptic function value $\wp(z)$ and its derivative $\wp'(z)$:
\begin{equation}
	\wp_{\text{val}} = \wp(z), \qquad \wp_{\text{deriv}} = \wp'(z) \label{eq:wp_and_deriv}
\end{equation}

To fully utilize the information in complex numbers, we extract the real and imaginary parts:
\begin{equation}
	f_1 = \text{Re}(\wp(z)), \quad f_2 = \text{Im}(\wp(z)), \quad f_3 = \text{Re}(\wp'(z)), \quad f_4 = \text{Im}(\wp'(z)) \label{eq:features}
\end{equation}

This yields a 4-dimensional position feature vector $\mathbf{f} = [f_1, f_2, f_3, f_4]^T$.

In certain regions, the Weierstrass elliptic function may yield exceptionally large values, which can compromise training stability, potentially leading to gradient explosion and computational failure.For handling large values at the poles, we initially envisioned leveraging the favorable properties of Jacobi's theta functions for truncation. Theta functions are entire functions, their series always converge, and they typically do so rapidly, allowing high precision to be achieved with only a few terms. However, experiments revealed that while this scheme was mathematically more advantageous, its empirical performance was suboptimal. The fundamental reason is that this computational approach was overly complex, leading to significant instability during the training process.Herein, we present two empirically validated and robust solutions, which are respectively more suitable for pre-training and fine-tuning.

\subsection{WEF-PE Implementation for From-Scratch Pre-training}

\subsubsection{Numerical Computation via Direct Lattice Summation}

When computing the series, we truncate it to a finite sum over indices $|m| \leq M, |n| \leq N$ (excluding the origin), where $\omega_{mn} = 2m\omega_1 + 2n\omega_3$:
\begin{equation}
	\wp(z) \approx \frac{1}{z^2} + \sum_{|m| \leq M, |n| \leq N, (m,n) \neq (0,0)} \left( \frac{1}{(z-\omega_{mn})^2} - \frac{1}{\omega_{mn}^2} \right).
	\label{eq:truncated_series}
\end{equation}
The contribution from a lattice point $w_{m,n}$ with a large modulus ($|w_{m,n}| \gg |z|$) has the asymptotic behavior:
\begin{equation} \label{eq:asymptotic_behavior}
	|T_{m,n}(z)| = \left| \frac{1}{(z - w_{m,n})^2} - \frac{1}{w_{m,n}^2} \right| \approx \frac{2|z|}{|w_{m,n}|^3} + O\left(\frac{|z|^2}{|w_{m,n}|^4}\right),
\end{equation}
indicating that the contribution decays as a power law of $|w_{m,n}|^{-3}$. To improve convergence, we sort lattice points by their modulus, such that $|\pi(w_1)| \le |\pi(w_2)| \le \cdots$. The reordered partial sum is then $S_K(z) = \sum_{k=1}^{K} T_{\pi(k)}(z)$, where $K$ is the truncation index. The truncation error can be bounded as:
\begin{equation} \label{eq:truncation_error}
	|S_\infty(z) - S_K(z)| \le \sum_{k=K+1}^{\infty} |T_{\pi(k)}(z)| \le C \sum_{k=K+1}^{\infty} \frac{1}{|\pi(w_k)|^3}.
\end{equation}
For a 2D lattice, the sum can be approximated by an integral, $\sum_{|w|>R} |w|^{-3} \sim \int_{R}^{\infty} r^{-2} dr = 1/R$, which ensures the error decays as $O(1/R_K)$, where $R_K = |\pi(w_K)|$. Adopting this modulus-based summation order, as opposed to a conventional lexicographical scheme, significantly accelerates convergence, improving the truncation error from $O(\log K / \sqrt{K})$ to $O(1/\sqrt{K})$. The upper bound of the truncation error, $E_{\text{trunc}}$, is thus estimated as:
\begin{equation} \label{eq:trunc_bound}
	E_{\text{trunc}} = \left| \sum_{|w|>R_{\text{max}}} T_w(z) \right| \le \sum_{|w|>R_{\text{max}}} \frac{2|z|}{|w|^3} \le \frac{2|z|}{R_{\text{max}}^2}.
\end{equation}
When selecting $M_{\text{search}} = N_{\text{search}} = 12$ yields $R_{\text{max}} \approx 30$, leading to an error of $E_{\text{trunc}} \lesssim 10^{-3}|z|$.

\subsubsection{Numerical Stability and Convergence Acceleration}

To prevent numerical explosion, we apply an adaptive tanh compression, $\tilde{f}_i = \tanh(\alpha_{\text{scale}} \cdot f_i)$ for $i=1, \dots, 4$, where the scaling parameter is learned as $\alpha_{\text{scale}} = \text{softplus}(\alpha_{\text{raw}})$ to ensure positivity. To handle the singularity, for inputs $z$ near a pole (i.e., $|z| < 15\epsilon$, where $\epsilon$ is a small threshold), the function value is clipped to a large constant $C_{\text{large}}$; otherwise, $\wp(z)$ is computed using Eq.~\eqref{eq:truncated_series}. The accumulated round-off error for a sum over $K$ terms is bounded by $E_{\text{round}} \le K \cdot \epsilon_{\text{mach}} \cdot \max_k |T_k|$, with machine epsilon $\epsilon_{\text{mach}} \approx 2.22 \times 10^{-16}$. By further clipping individual terms such that $\max_k |T_k| \le M_{\text{clip}}$, the total round-off error is controlled to the order of $10^{-10}$. This process ensures numerical stability without compromising model accuracy or encoding fidelity.

\subsubsection{Architectural Integration with Learnable Parameters}

\textbf{Lattice Shape Parameter.} To allow the model to learn the optimal spatial scaling, the imaginary half-period $\omega_3$ is made learnable. However, its imaginary part, $\omega_{3i}$, must be strictly positive. Since standard gradient-based optimizers do not inherently handle such constraints, we do not learn $\omega_{3i}$ directly. Instead, we introduce an unconstrained trainable parameter $\alpha_{\text{learn}}$ and define $\omega_3 = i \cdot \omega_{3i}$, where $\omega_{3i}$ is derived via the softplus function to ensure positivity: $\omega_{3i} = \text{softplus}(\alpha_{\text{learn}}) = \log(1 + \exp(\alpha_{\text{learn}}))$. The other lattice parameter, $\omega_1$, is kept as a fixed constant to prevent potential overfitting. This configuration allows the lattice basis, which is initialized to be orthogonal and form a square, to adaptively transform into a rectangle during training, thereby learning the optimal aspect ratio for the given data.

\textbf{Position Encoding Strength Control.} Recognizing that semantic and positional information are not always of equal importance, we introduce a learnable global strength parameter, $\beta_{\text{pos}}$, to balance their contributions. The final position encoding is scaled as $\mathbf{PE}_{\text{final}} = \beta_{\text{pos}} \cdot \mathbf{PE}_{\text{WEF}}$, where $\mathbf{PE}_{\text{WEF}}$ is the encoding generated by our method.

\subsubsection{Network Architecture Integration}

The 4D WEF features $\tilde{\mathbf{f}}_{ij}$ are projected to the model dimension $d$ via a linear layer, yielding patch encodings $\mathbf{PE}_{ij} = \text{LayerNorm}(\mathbf{W}_{\text{proj}} \tilde{\mathbf{f}}_{ij} + \mathbf{b}_{\text{proj}})$, where $\mathbf{W}_{\text{proj}} \in \mathbb{R}^{d \times 4}$ and $\mathbf{b}_{\text{proj}} \in \mathbb{R}^d$. For enhanced representational capacity, this linear layer can be substituted with an MLP. The classification token is assigned a separate learnable encoding, $\mathbf{PE}_{\text{cls}} \in \mathbb{R}^{1 \times d}$. Finally, these position encodings are added to their corresponding patch and token embeddings to form the model's input sequence:
\begin{equation}
	\mathbf{X}_{\text{input}} = \begin{bmatrix}
		\mathbf{x}_{\text{cls}} + \mathbf{PE}_{\text{cls}} \\
		\mathbf{x}_1 + \mathbf{PE}_1 \\
		\vdots \\
		\mathbf{x}_{HW} + \mathbf{PE}_{HW}
	\end{bmatrix}.
	\label{eq:final_input}
\end{equation}

\subsection{WEF-PE Adaptation for Fine-tuning}

For the demanding context of fine-tuning large pre-trained models,when executing fine-tuning tasks, models often face fewer training epochs compared to pre-training. Concurrently, as the model already possesses prior knowledge from pre-training, an overly strong injection of geometric information can be counterproductive, leading to decreased stability and convergence speed. To circumvent these limitations, we employ an algorithm for fine-tuning tasks that is similar in principle but different in implementation details, based on a rapidly converging Fourier-like series expansion.

For the periodic part of the function(Equation \ref{eq:truncated_series}), we found that its essential structure is a combination of a periodic oscillation along one direction, which can be modeled with sine and cosine functions, and an exponential decay in an orthogonal direction. Therefore, we can construct a rapidly converging Fourier-like series to efficiently model this periodic behavior.

The value $\wp(z)$ is approximated by a primary term handling the pole at the origin and a series of correction terms:
\begin{align}
	\wp(z) &\approx \frac{1}{|z|^2 + \beta} + \sum_{k=1}^{K} \frac{\gamma}{k^2} \left[ \cos(k\pi u') e^{-k\pi |v'|} + \sin(k\pi v') e^{-k\pi |u'|} \right]
\end{align}
where $u' = Re(z)/\omega_1$, $v' = Im(z)/\omega'_{3}$, and $\beta, \gamma$ are learnable scalar parameters controlling numerical stabilization near the origin and the amplitude of the periodic corrections, respectively.

The numerical stability of this formulation is principally derived from two components. First, the term $\frac{1}{|z|^2 + \beta}$ replaces the singular term $\frac{1}{z^2}$. By introducing the small, positive, learnable parameter $\beta = \text{softplus}(\beta_{raw})$, we ensure the denominator is strictly positive, thus removing the singularity at $z=0$ while preserving the function's asymptotic behavior. Second, the convergence of the summation is guaranteed by the exponential decay terms $e^{-k\pi |v'|}$ and $e^{-k\pi |u'|}$. These terms ensure that the series converges rapidly, allowing for high precision with a small number of terms ($K$). The truncation error is exponentially bounded, making this formulation significantly more stable and computationally efficient than direct lattice summation, especially within a gradient-based optimization framework.We provide a more detailed derivation in Appendix \ref{sec:derivation_rationale}.

\subsection{Theoretical Explanation}

\subsubsection{Mapping Normalized Coordinates to the Complex Plane via Isomorphism}
The transformation of normalized two-dimensional patch coordinates $(u, v)$ into the complex plane $\mathbb{C}$ constitutes a foundational step in the proposed Weierstrass Elliptic Function Positional Encoding  methodology. This procedure is not merely a coordinate re-expression but establishes a linear isomorphism from the real vector space $\mathbb{R}^2$ to its image within $\mathbb{C}$. Setting the constants $c_1 = \alpha_{u} \cdot 2\text{Re}(\omega_{1})$ and $c_2 = \alpha_{v} \cdot 2\text{Im}(\omega_{3})$, mapping \ref{eq:complex_mapping},$ \mathbb{R}^2 \to \mathbb{C}$,simplifies to $T(u, v) = c_1 u + i c_2 v$.

 Let $\mathbf{x}_1 = (u_1, v_1)$ and $\mathbf{x}_2 = (u_2, v_2)$ be vectors in $\mathbb{R}^2$, and let $a, b \in \mathbb{R}$ be scalars.
		\begin{align*}
			T(a\mathbf{x}_1 + b\mathbf{x}_2) &= T(au_1 + bu_2, av_1 + bv_2) \\
			&= c_1(au_1 + bu_2) + i c_2(av_1 + bv_2) \\
			&= a(c_1 u_1 + i c_2 v_1) + b(c_1 u_2 + i c_2 v_2) \\
			&= aT(\mathbf{x}_1) + bT(\mathbf{x}_2)
		\end{align*}
		The mapping respects vector addition and scalar multiplication, hence it is a linear transformation.
		
To prove injectivity, we need to determine the kernel of the map T and show that it is trivial.obviously,$T(u, v) = 0 \iff c_1 u + i c_2 v = 0$.
Since $c_1, c_2, u, v$ are real, this equality holds if and only if the real and imaginary parts are both zero.
		
Given that the periods and scaling factors are non-zero, $c_1 \neq 0$ and $c_2 \neq 0$, which implies $u=0$ and $v=0$. Therefore, the kernel is $\ker(T) = \{(0,0)\}$, and the transformation is injective.A linear transformation between vector spaces is an isomorphism if it is bijective. Since $T$ is an injective linear map from the finite-dimensional space $\mathbb{R}^2$ to its image, it is an isomorphism onto its image. The mapping method we selected is an isomorphic mapping. Thus, the spatial relationships between any image patches before mapping remain applicable after mapping. This ensures that the geometric properties of the image patches are not compromised during the process of mapping normalized coordinates to the complex plane.

\subsubsection{Relative Position Modeling Based on Addition Formula}

The addition formula of Weierstrass elliptic functions provides position encoding with a unique relative position awareness capability, which represents an important characteristic that traditional position encoding methods struggle to achieve. According to elliptic function theory, the additive relationship between any two points $z_1, z_2$ is precisely captured by the addition formula.

When the absolute positions $z_i$ and $z_j$ of image patches are mapped to the complex plane, their relative position can be represented as $\sigma_z = z_j - z_i$. Through the addition formula, $\wp(z_j) = \wp(z_i + \sigma_z)$ can be expressed using algebraic combinations of $\wp(z_i)$, $\wp(\sigma_z)$, and their derivatives. The crucial significance of this algebraic relationship lies in the model's ability to directly derive relative position information between any two points from absolute position encodings, without requiring additional relative position encoding modules.

In practical applications within self-attention mechanisms, the interaction between query vector $Q_{z_i}$ and key vector $K_{z_j}$ can effectively utilize the $\wp(z_j - z_i)$ terms derived from the addition formula. This design enables attention weights to naturally encode relative relationships between positions, thereby enhancing the model's understanding of spatial structures. Compared to methods that require explicit computation of relative position encodings for all position pairs, the approach based on elliptic function addition formulas provides more precise and continuous relative position representations while maintaining computational efficiency. This characteristic offers significant advantages for handling visual tasks with complex spatial dependencies.

\subsubsection{Long-term decay of WEF}

 A key theoretical advantage of our Weierstrass Elliptic Function Positional Encoding (WEF-PE) is the distance decay property, which we formalize as a theorem: for any two positions with Euclidean distance $d$, the expected inner product of their encodings, $\mathbb{E}[\mathbf{p}_1^T \mathbf{p}_2] = S(d)$, is a strictly monotonically decreasing function for $d>0$. The proof relies on the distance-preserving mapping from patch coordinates to the complex plane and the periodic nature of the Weierstrass elliptic function $\wp(z)$. The inner product of the final encodings—which are linear projections of 4D feature vectors derived from $\wp(z)$ and its derivative—can be shown to weaken with a cosine-like decay pattern as the spatial distance increases. This property ensures that the model is endowed with an explicit spatial proximity prior, which is crucial for vision tasks. A full proof is provided in the Appendix.

\subsubsection{Resolution-Invariant Positional Encoding through Continuous Function Evaluation}

The predominant practice in Vision Transformer fine-tuning involves processing images at substantially higher resolutions than those employed during pre-training to capture fine-grained visual details crucial for downstream task performance. Traditional learnable positional embeddings, implemented as discrete parameter matrices indexed by patch positions, face fundamental limitations when confronted with resolution changes since the embedding lookup table learned for a specific spatial configuration $(H_{\text{pre}}/P) \times (W_{\text{pre}}/P)$ becomes incompatible with the altered grid dimensions $(H_{\text{fine}}/P) \times (W_{\text{fine}}/P)$ encountered during fine-tuning. The standard remedy involves interpolating the pre-trained positional embedding matrix through bilinear or bicubic resampling, a process that inevitably introduces systematic distortions including the attenuation of high-frequency spatial patterns, the generation of aliasing artifacts at grid boundaries, and the corruption of learned geometric relationships between non-adjacent positions.

The Weierstrass elliptic function positional encoding fundamentally transcends these limitations through its formulation as a continuous meromorphic function evaluated at arbitrary complex coordinates rather than a discrete lookup operation. For any patch position $(i, j)$ in an image of arbitrary resolution, the positional encoding is computed through direct evaluation of $\wp(z)$ where $z = \alpha_u \cdot u \cdot 2\text{Re}(\omega_1) + i \cdot \alpha_v \cdot v \cdot 2\text{Im}(\omega_3)$ with normalized coordinates $u = (j + 0.5)/(W/P)$ and $v = (i + 0.5)/(H/P)$ that map the discrete patch grid to the continuous domain $[0, 1]^2$ regardless of the underlying image dimensions. This continuous formulation enables the generation of positional encodings at any spatial resolution without resorting to interpolation of pre-computed values, thereby preserving the mathematical precision and geometric fidelity of the spatial representation.

When transitioning from pre-training to fine-tuning at higher resolution, the WEF-PE adapts through modulation of its intrinsic mathematical parameters rather than through post-hoc resampling of learned embeddings. The scaling factors $\alpha_u$ and $\alpha_v$ control the effective spatial frequency of the elliptic function across horizontal and vertical dimensions respectively, enabling the encoding to maintain optimal spatial discrimination at the increased resolution while preserving the fundamental periodic structure that encodes translational regularities in visual data. The imaginary half-period parameter $\omega_3^i = \text{softplus}(\alpha_{\text{learn}})$ adapts during fine-tuning to accommodate changes in the aspect ratio between patches and the overall spatial density of the representation, ensuring that the doubly periodic lattice structure characterized by the fundamental parallelogram maintains geometric consistency across resolutions.

The computational implementation requires only adjustment of the coordinate mapping while maintaining the same elliptic function evaluation procedure, achieved through modification of the normalization factors that transform discrete patch indices to continuous complex coordinates. The lattice summation in Equation \ref{eq:truncated_series} remains numerically stable across different resolutions since the truncation parameters $M$ and $N$ are determined by the desired numerical precision rather than the specific image dimensions, ensuring consistent computational accuracy regardless of the resolution scaling factor. 

Empirical evaluation demonstrates that models employing pure WEF positional encoding without interpolation achieve superior transfer performance compared to interpolated learnable embeddings, with particularly pronounced advantages when the resolution ratio $(\text{fine}/\text{pre})^2$ exceeds 2.0 where interpolation artifacts become increasingly detrimental. The continuous nature of the elliptic function encoding eliminates the systematic errors introduced by discrete resampling including the loss of high-frequency positional information critical for precise spatial localization, the corruption of learned attention patterns through smoothing artifacts, and the introduction of spurious correlations between positions that were non-adjacent in the original grid configuration. This mathematical elegance in handling resolution changes positions the Weierstrass elliptic function as a principled solution to the persistent challenge of resolution adaptation in Vision Transformer architectures, providing a theoretically grounded and empirically validated alternative to the ad-hoc interpolation procedures that have constrained the full potential of high-resolution fine-tuning.

\section{Experiment}

To demonstrate the effectiveness of the Weierstrass elliptic function as a positional embedding, we conduct experiments under both pre-training and fine-tuning settings. Furthermore, we perform ablation studies to analyze the influence of each module component, and conduct an empirical analysis to demonstrate the intrinsic working mechanism of the Weierstrass elliptic function positional encoding and its theory and rationale as a positional encoding.

\subsection{Understand WEF}

\textbf{WEF has stronger geometric inductive bias.}To investigate the inductive bias introduced by our proposed Weierstrass Elliptic Function (WEF) positional encoding, we visualize the self-attention maps from the initial, randomly-initialized Vision Transformer (ViT) model, without any training. Specifically, we analyze the attention distribution originating from a central query patch to all other patches within the sequence. As depicted in Fig. 1, the results reveal a striking difference between our method and the standard baseline. The model equipped with our WEF encoding (Fig. a) exhibits a highly structured and localized attention pattern. The attention weights are concentrated on the query patch itself and decay smoothly and isotropically with increasing spatial distance. This demonstrates that our WEF encoding intrinsically endows the model with a strong spatial locality prior, a critical inductive bias for vision tasks. In stark contrast, the baseline model using standard learnable Absolute Positional Embeddings (APE), as shown in Fig. b, displays a uniform and unstructured attention distribution, where attention weights appear randomly scattered. This comparison empirically validates that our method effectively injects geometric structure into the model's architecture, pre-disposing it to focus on local interactions even before any learning occurs, whereas the baseline model lacks such an inherent spatial understanding.

To fundamentally evaluate the structural properties endowed by our proposed Weierstrass Elliptic Function (WEF) positional encoding, we conducted a comparative analysis against the standard learnable Absolute Positional Embeddings (APE). We visualized the intrinsic structure of these encodings prior to any model training, as illustrated in Fig. The analysis comprises two parts: a Principal Component Analysis (PCA) to reveal the geometric manifold of the embeddings, and a cosine similarity matrix to expose their relational structure.

The top row of Fig presents the PCA results. The WEF encodings (Fig. a) form a highly structured, spiral-like manifold, where the spatial arrangement of patches is smoothly and faithfully preserved in the principal component space. The color gradient, representing the original 2D coordinates, confirms this topological isomorphism. Conversely, the APEs (Fig. b) project into an unstructured, Gaussian-like cloud, demonstrating a complete lack of inherent spatial organization.

Further compelling evidence is provided by the similarity matrices in the bottom row. The WEF matrix (Fig. c) displays a distinct, periodic, and grid-like pattern, indicating that the relationships between encodings are systematically governed by their relative spatial distances. In stark contrast, the APE matrix (Fig. d) resembles random noise, with the exception of the identity diagonal, confirming the absence of any pre-defined relational structure.

Collectively, these results provide unequivocal evidence that our WEF formulation successfully injects a robust and accurate 2D geometric inductive bias into the Vision Transformer architecture from initialization. This inherent structural prior is absent in standard models, which must learn spatial relationships de novo from data.

\begin{figure}[h]
	\begin{center}
		\includegraphics[width=0.8\textwidth]{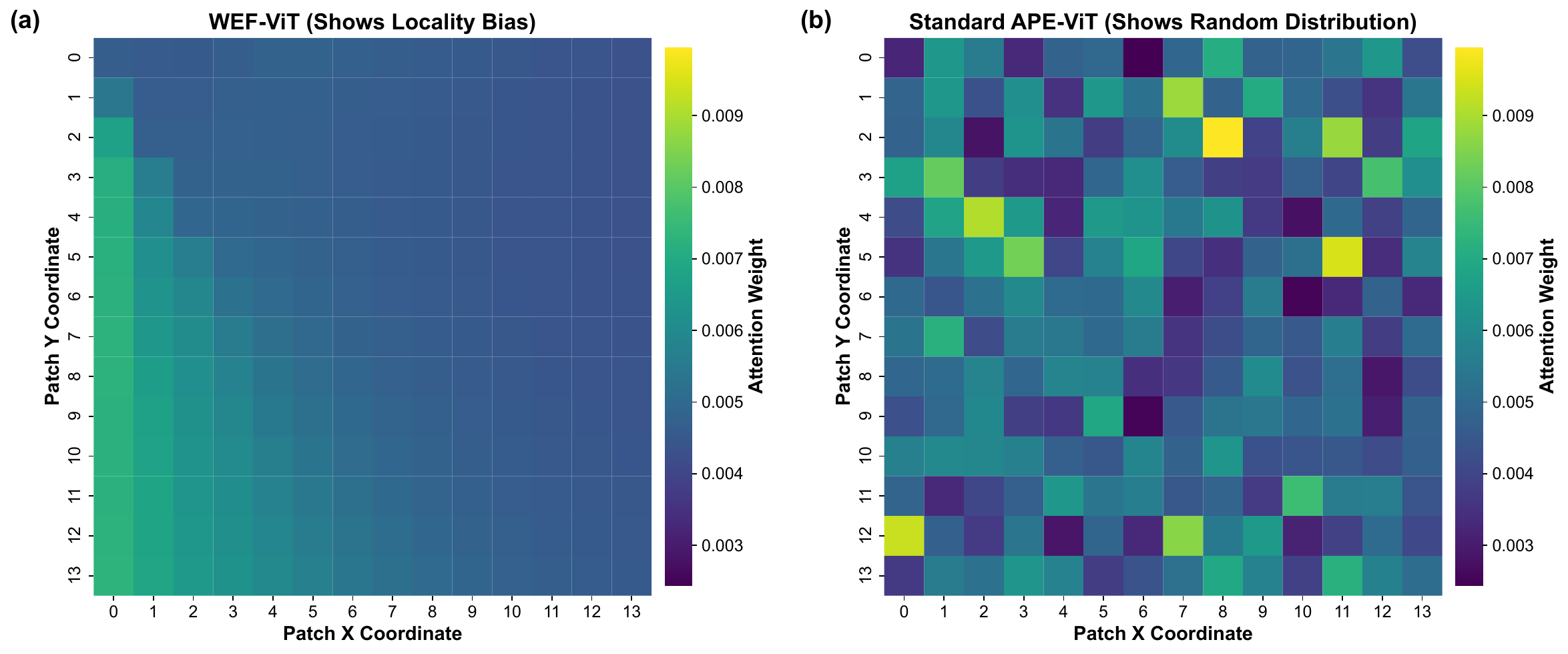} 
	\end{center}
	\caption{Comparative analysis of geometric inductive bias between WEF-ViT and standard APE-ViT architectures. (a) WEF-ViT exhibits structured locality-aware attention patterns with smooth isotropic decay from query patches. (b) Standard APE-ViT demonstrates uniform random attention distribution lacking spatial structure. Lower panels show patch segmentation results highlighting superior spatial coherence in WEF-based models.}
	\label{fig:inductive_bias_comparison} 
\end{figure}

\begin{figure}[h]
	\begin{center}
		\includegraphics[width=1.0\textwidth]{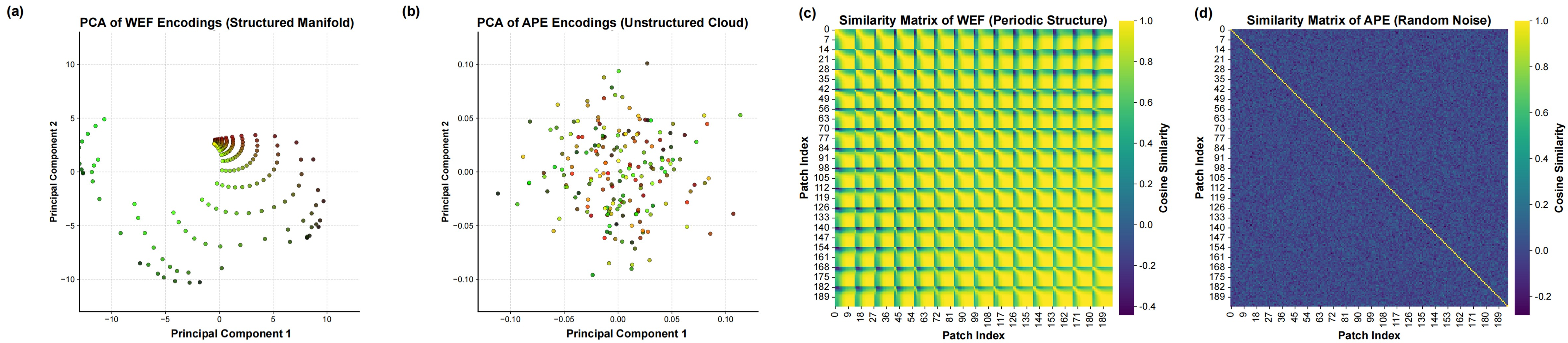} 
	\end{center}
	\caption{Structural properties of positional encodings revealed through principal component analysis and similarity matrix visualization. (a,b) PCA projections demonstrate WEF encodings form structured spiral manifolds preserving spatial topology, while APE encodings exhibit unstructured Gaussian distributions. (c,d) Cosine similarity matrices reveal WEF encodings display periodic grid-like patterns reflecting systematic spatial relationships, contrasting with random noise-like patterns in APE encodings.}
	\label{fig:structural_analysis} 
\end{figure}

\textbf{Global Semantic Attention in Vision Transformers.}To qualitatively evaluate the impact of our proposed Weierstrass Elliptic Function (WEF) positional encoding on the model's learned semantic focus, we conducted a comparative visualization study against a baseline model using standard learnable Absolute Positional Embeddings (APE). For a fair comparison, both models utilized an identical ViT-Tiny architecture (12-layer depth, 3 attention heads, 192 embedding dimension) and were trained on the CIFAR-100 dataset until convergence. We then visualized the complete information flow from input to output on unseen high-resolution images using the Attention Rollout method.

The results, presented in Figure, consistently demonstrate a significant qualitative difference in the learned attention patterns. For instance, when presented with an image of a cat, the WEF-ViT model's attention forms a coherent and complete silhouette that accurately envelops the entire animal. In stark contrast, the APE-ViT's attention is fragmented, focusing disproportionately on high-contrast edges where the subject meets the background, rather than the semantic object itself. This pattern is further exemplified in the "Airplane" and "Golden Gate Bridge" examples. The WEF-ViT correctly identifies the global, structural forms of these subjects—the cruciform shape of the aircraft and the macroscopic, symmetrical structure of the bridge. Conversely, the APE-ViT exhibits a scattered attention pattern, fixating on localized, high-frequency details such as individual engines or lighted portions of a single bridge tower, while failing to capture the overall gestalt of the objects.

From these visualizations, we conclude that the geometric inductive bias inherent in our WEF encoding enables the model to develop a more holistic and structurally-aware understanding of visual scenes. The model learns to associate features within a global spatial context, resulting in attention maps that align closely with the primary semantic content. The baseline APE model, lacking this structural prior, appears to overfit to low-level, local cues (e.g., edges, textures), leading to a fragmented attention mechanism that often fails to represent the complete semantic entity within the image. This provides strong qualitative evidence that our WEF encoding fosters a superior and more robust form of visual representation learning in Vision Transformers.

\begin{figure}[h]
	\begin{center}
		\includegraphics[width=0.8\textwidth]{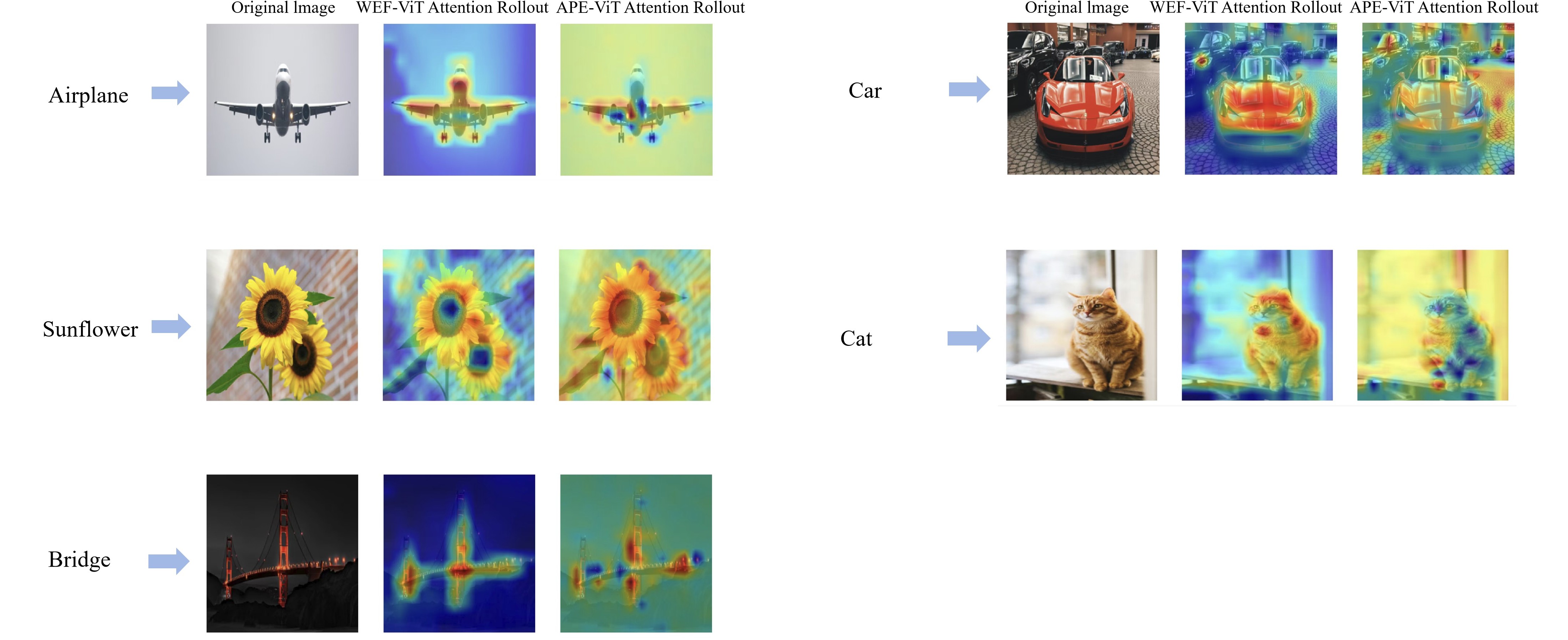} 
	\end{center}
	\caption{Attention rollout visualization comparing semantic focus patterns between WEF-ViT and APE-ViT models trained on CIFAR-100. WEF-ViT consistently produces coherent attention maps encompassing complete object silhouettes (airplane cruciform structure, bridge symmetry, animal contours), while APE-ViT exhibits fragmented attention focusing on high-contrast edges rather than semantic entities.}
	\label{fig:semantic_attention} 
\end{figure}

\textbf{Long-term Attenuation of Positional Encoding.} To validate the distance-decay property of our Weierstrass Elliptic Function Positional Encoding (WEF-PE), we analyzed the relationship between spatial distance and encoding interaction strength on a $14 \times 14$ patch grid (from $224\times224$ images). For all $\binom{196}{2}=19,110$ patch pairs, we computed their normalized Euclidean distance $d_{\text{relative}} \in [0, 100]$ and the cosine similarity $S$ of their corresponding encodings $(\mathbf{p}_i, \mathbf{p}_j)$. To enhance visual interpretability, we performed a linear transformation on the cosine similarity scores, which is based on min-max normalization, to map them to a new standardized range. To distill the underlying trend from the noisy data, we employed a binning strategy. The data (summarized in Figure~\ref{fig:distance_decay_analysis} and Table~\ref{sample-table}) was partitioned into 80 bins based on distance, and we then performed data aggregation by computing the mean interaction strength within each bin. This analysis revealed a strong negative correlation ($\rho = -0.966$) and clear distance-decay characteristics.

In practical ViT applications, self-attention operates on fused representations of content and position. To analyze the positional signal's effect in such a content-agnostic context, we synthesized random content features $\mathbf{f}_{i,j} \in \mathbb{R}^{192}$ by sampling from a standard normal distribution, $\mathcal{N}(\mathbf{0}, \mathbf{I})$, to simulate content noise. These features were then fused with their corresponding WEF positional encodings $\mathbf{p}_{i,j}$ via element-wise addition to form the final representations $\mathbf{h}_{i,j} = \mathbf{f}_{i,j} + \mathbf{p}_{i,j}$. We then repeated the same distance-similarity analysis on these fused vectors.


\begin{table}[t]
	\caption{Quantitative Analysis Results of the Distance-Interaction Strength Relationship}
	\label{sample-table}
	\begin{center}
		\begin{tabular}{ll}
			\multicolumn{1}{c}{\bf Metric}  &\multicolumn{1}{c}{\bf Value}
			\\ \hline \\
			Pearson Correlation Coefficient $\rho$ & -0.966 \\
			Relative Distance Range & [0, 100] \\
			Relative Upper Bound Range & [13.5, 16.5] \\
			Initial Interaction Strength & 16.5 \\
			Final Interaction Strength & 13.8 \\
			Decay Magnitude $\Delta_{\text{decay}}$ & 16.4\% \\
			Monotonicity Metric $M$ & 87.5\% \\
		\end{tabular}
	\end{center}
\end{table}

\begin{figure}[h]
	\begin{center}
		\includegraphics[width=0.8\textwidth]{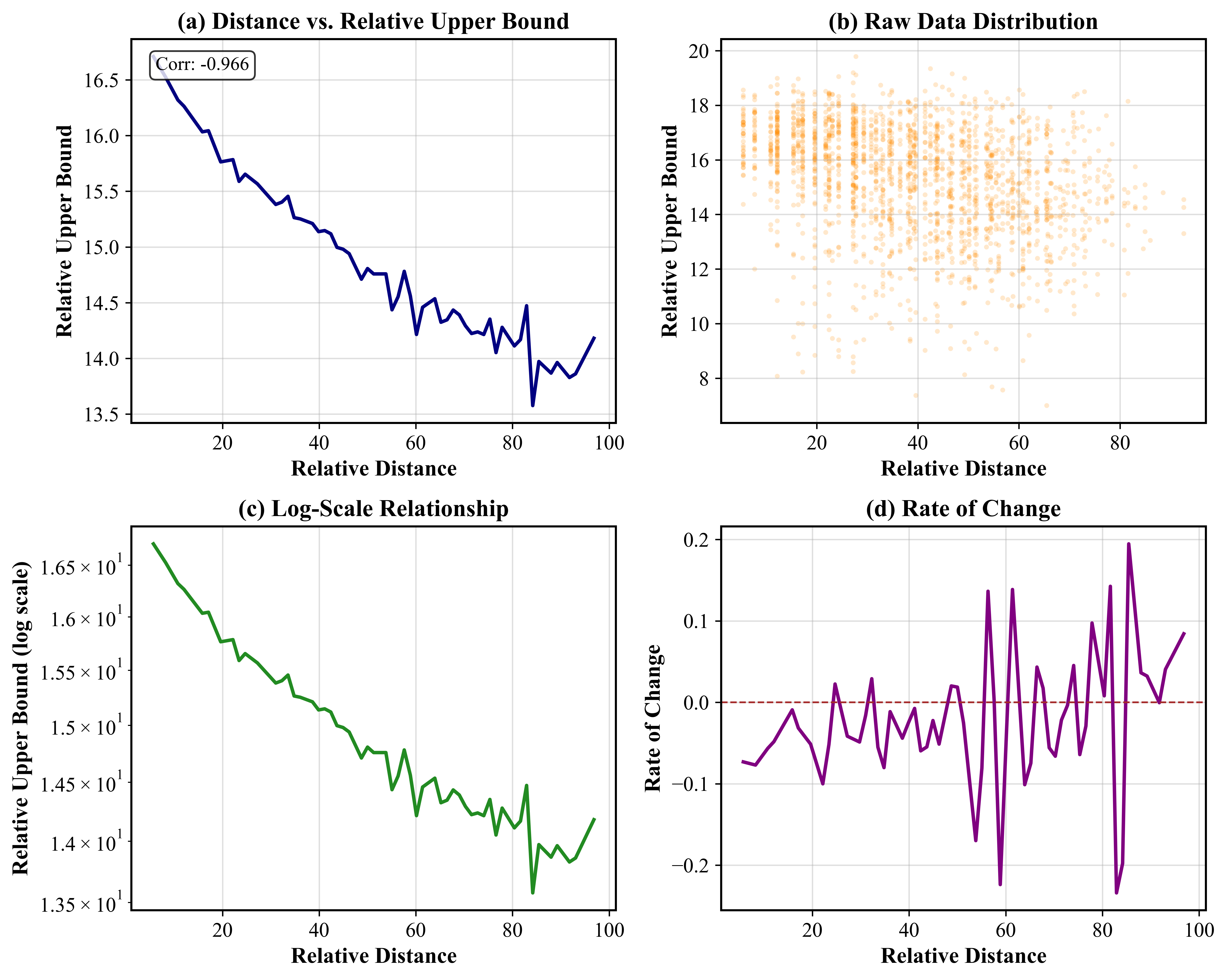}
	\end{center}
	\caption{Quantitative analysis of distance-decay properties in WEF positional encoding. (a) Scatter plot and fitted curve demonstrate strong negative correlation between relative patch distance and interaction strength. (b) Raw data distribution across 19,110 patch pairs. (c) Log-scale relationship confirming exponential decay characteristics. (d) Rate of change analysis revealing monotonic decrease in similarity with increasing spatial separation.}
	\label{fig:distance_decay_analysis}
\end{figure}

\begin{figure}[h!]
	\centering 
	
	\begin{minipage}{0.5\textwidth}
		\centering
		\includegraphics[width=0.95\linewidth]{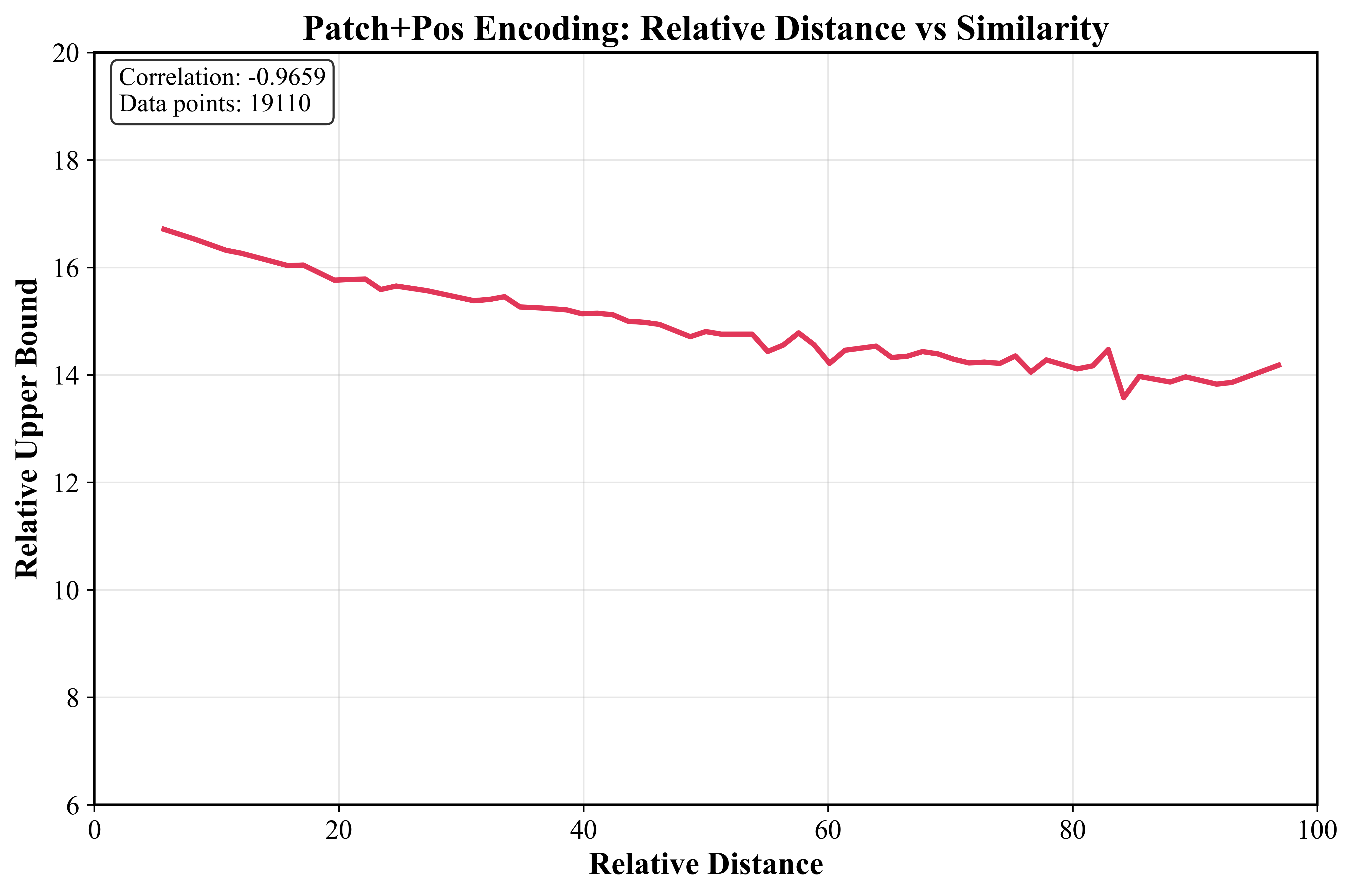}
	\end{minipage}%
	\hfill 
	\begin{minipage}{0.45\textwidth}
		\caption{Empirical validation of WEF positional encoding distance-decay theorem showing relationship between patch spatial separation and cosine similarity of positional encodings. The monotonic decrease from 16.5 to 13.8 across normalized distances confirms theoretical predictions of spatial proximity preservation in the embedding space.}
		\label{fig:distance_decay_validation}
	\end{minipage}
	
\end{figure}

\subsection{pre-training}

\textbf{Pre-training from Scratch on CIFAR-100.} To validate our Weierstrass Elliptic Function Positional Encoding (WEF-PE), we trained a Vision Transformer Tiny (ViT-Ti) model from scratch on the CIFAR-100 dataset. The ViT-Ti (192-dim, 12-layer, 3-head) was trained for 120 epochs using the AdamW optimizer with a cosine annealing learning rate schedule (15-epoch warmup, 0.0015 initial LR) and standard data augmentations. Our numerical implementation of the elliptic function ensured stability through techniques such as double-precision arithmetic and adaptive tanh compression.

The model trained stably and achieved a peak validation accuracy of 63.78\% at epoch 110. As detailed in Figure~\ref{fig:training_dynamics}, the smooth loss curves and sustained accuracy improvement demonstrate the compatibility of WEF-PE with standard optimization procedures. The consistent performance gains, even in later epochs, suggest that the rich mathematical structure of our encoding provides a representational capacity that benefits from extended training, validating our approach. The implementation also proved to be computationally efficient and numerically robust.

\begin{figure}[h]
	\begin{center}
		\includegraphics[width=0.8\textwidth]{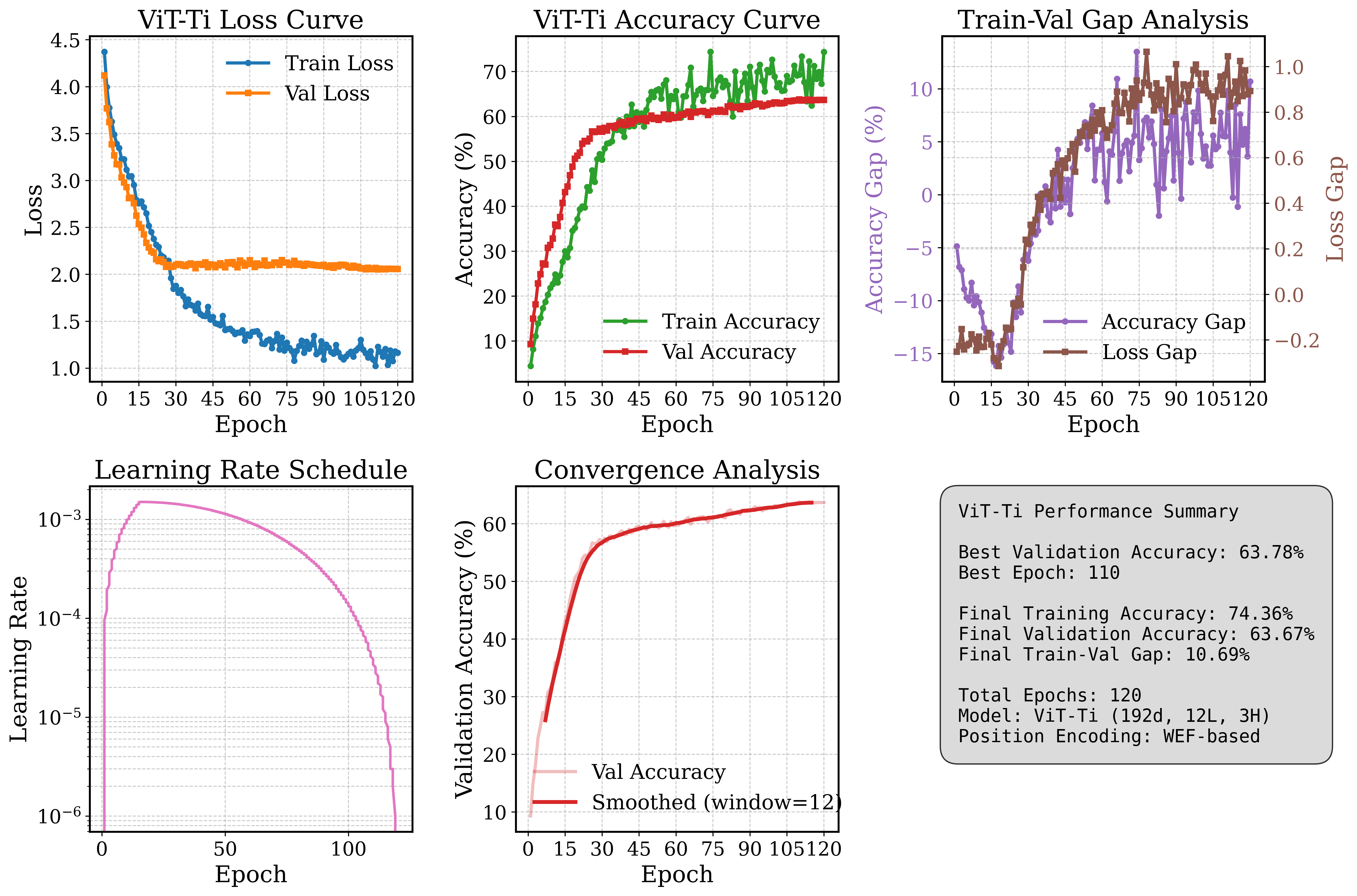} 
	\end{center}
	\caption{Comprehensive training dynamics and performance metrics for ViT-Tiny model with WEF positional encoding on CIFAR-100 dataset. Training and validation loss curves demonstrate stable convergence, accuracy progression shows consistent improvement reaching 63.78\% peak validation accuracy, learning rate schedule exhibits planned decay profile, and convergence analysis confirms robust optimization without premature plateauing over 120 epochs.}
	\label{fig:training_dynamics} 
\end{figure}

\textbf{Adapting Rotary Positional Encoding and Fourier-based positional encodings for 2D image applications.}We compare our method against two prominent positional encoding baselines: Rotary Positional Encoding (RoPE)(\cite{su2021roformer}) and Fourier-based positional encoding (FoPE)(\cite{hua2024fourier}). RoPE leverages rotation matrices to endow self-attention with explicit relative position dependencies, while FoPE maps coordinates to a sinusoidal manifold to overcome the spectral bias of neural networks. As both were originally developed for text sequences, we adapted them for 2D image tasks to facilitate a direct comparison with our proposed encoding.

The foundational step for adaptation is the transformation of a 2D image into a 1D sequence of tokens. This is accomplished through a non-overlapping patch extraction and embedding procedure. An input image $I \in \mathbb{R}^{H \times W \times C}$ is first partitioned into a grid of $N$ patches, each of size $P \times P$. Our implementation achieves this efficiently not through explicit slicing, but by utilizing a 2D convolution layer where both the kernel size and stride are set equal to the patch size, $P$. This operation performs a joint patching and linear projection, mapping each 2D image patch into a $D$-dimensional vector embedding.

The output of this convolutional projection is a tensor of shape $(B, D, H/P, W/P)$, where $B$ is the batch size. This tensor retains a 2D spatial structure. The subsequent and critical step is to serialize this grid. This is achieved by flattening the last two spatial dimensions into a single dimension representing the sequence of patches. A transposition of dimensions then yields the final input tensor $X \in \mathbb{R}^{B \times N \times D}$, where $N = (H \times W) / P^2$ is the total number of patches.
\begin{equation*}
	I_{\text{image}} \in \mathbb{R}^{H \times W \times C} \xrightarrow{\text{Patching \& Projection}} X_{\text{grid}} \in \mathbb{R}^{D \times (H/P) \times (W/P)} \xrightarrow{\text{Flatten \& Transpose}} X_{\text{sequence}} \in \mathbb{R}^{N \times D}
\end{equation*}
Through this process, the image is effectively converted into an ordered, one-dimensional sequence of patch embeddings. Each patch now occupies a unique and unambiguous position index $m \in \{0, 1, \dots, N-1\}$ within this sequence. This structural equivalence to a sentence of word embeddings is the key enabler for applying 1D positional encoding schemes.With the image data successfully serialized, the positional encoding mechanisms can be applied directly to the sequence dimension. Both RoPE and FoPE are integrated within the multi-head self-attention module, where they modify the query ($q$) and key ($k$) vectors just before the attention score computation.

For each patch embedding at position $m$ in the sequence, a corresponding positional encoding is generated. The encoding is not added to the patch embedding but is instead used to apply a rotational transformation to its corresponding query and key vectors. The position index $m$ of the patch in the 1D sequence dictates the specific rotation angle. For an attention computation between a patch at sequence position $m$ and another at position $n$, the respective rotations on their query and key vectors ensure that the resulting dot-product is a function of their relative distance, $m-n$.

The adaptation, therefore, does not alter the core logic of RoPE or FoPE. It simply re-purposes them to operate on a sequence of image patches rather than word tokens. Following this, the model is trained using the same procedure.

\begin{table}[t]
	\caption{Evaluation metrics for ViT-Tiny models incorporating various positional encodings, trained from scratch on the CIFAR-100 dataset.}
	\label{ablation-results}
	\begin{center}
		\begin{tabular}{lc}
			\multicolumn{1}{c}{\bf Positional Encoding Method} &\multicolumn{1}{c}{\bf Accuracy (\%)} \\
			\hline \\
			Learnable Positional Encoding & $56.46$\\
			Rotary Positional Encoding & $57.29$ \\
			Fourier Positional Encoding & $57.70$ \\
			\hline \\
			WEF-T (Ours) & $63.78$ \\
			
		\end{tabular}
	\end{center}
\end{table}

\textbf{Training DHVT-Ti from Scratch.} To further assess our method, we integrated the WEF-PE into a Dynamic Hybrid Vision Transformer Tiny (DHVT-Ti) model(\cite{lu2022bridging}), which is engineered for data efficiency on smaller datasets.The DHVT model is specifically engineered to enhance the inductive biases of Vision Transformers for improved data efficiency on
small-scale datasets by incorporating convolutional operations.This serves as an excellent baseline model for comparing the pre-training capabilities of various vision models. T We trained the model from scratch on CIFAR-100 for 100 epochs using the AdamW optimizer with a cosine annealing schedule (20-epoch warmup, $1 \times 10^{-3}$ initial LR) and strong data augmentations. The model trained stably and achieved a peak validation accuracy of 76.53\%, a highly competitive result that demonstrates the compatibility of our complex positional encoding with hybrid transformer architectures.

\begin{table}[t]
	\caption{Results on 224$\times$224 resolution. All the models are trained from scratch for 100 epochs under the same training schedule.}
	\label{tab:224_resolution_results}
	\begin{center}
		\begin{tabular}{lrrrrrr}
			\multicolumn{1}{c}{\bf Method} & \multicolumn{1}{c}{\bf \#Params} & \multicolumn{1}{c}{\bf GFLOPs} & \multicolumn{1}{c}{\bf Accuracy (\%)}  \\
			\hline \\
			ResNet-50+$\mathcal{L}_{\text{dr.loc}}$ & 21.2M & 3.8 & 72.94 \\
			SwinT+$\mathcal{L}_{\text{dr.loc}}$ & 24.1M & 4.3 & 66.23  \\
			CvT-13+$\mathcal{L}_{\text{dr.loc}}$ & 19.6M & 4.5 & 74.51 \\
			T2T-ViT+$\mathcal{L}_{\text{dr.loc}}$ & 21.2M & 4.8 & 68.03 \\
			DHVT-T & 6.0M & 1.2 & 74.78 \\
			\hline \\
		    WEF-T (Ours) & 5.5M & 1.6 & 76.53 \\
		
		\end{tabular}
	\end{center}
\end{table}

\subsection{fine-tuning}

\textbf{Fine-tuning on VTAB-1k.} To assess transfer learning capabilities, we fine-tuned an ImageNet-21k pre-trained ViT-L/16 model on three representative VTAB-1k tasks using the 1k-shot protocol(\cite{zhai2020large}). We fine-tuned at a $384 \times 384$ resolution, which required bilinearly interpolating the original pre-trained positional embeddings from a $14 \times 14$ to a $24 \times 24$ grid. Instead of replacing these embeddings, we employed a hybrid architecture, dynamically combining the interpolated embeddings with our WEF-PE via a learnable gating parameter, $\lambda$. We utilized a differentiated optimization strategy with tiered learning rates for the ViT backbone, the WEF-PE parameters, and the classification head, using the AdamW optimizer with a cosine annealing schedule. 

\begin{table}[htbp]
	\centering
	\caption{Performance breakdown on selected VTAB-1k tasks.}
	\label{tab:vtab_breakdown_selected}
	
	\newcommand{\rot}[1]{\rotatebox{90}{#1}}
	
	\setlength{\tabcolsep}{10pt} 
	
	\begin{tabular}{@{}l c|c|c@{}} 
		\toprule
		& \textcolor{red}{$\bullet$} 
		& \textcolor{green}{$\bullet$}
		& \textcolor{blue}{$\bullet$} \\
		
		& \rot{SVHN} 
		& \rot{Resisc45} 
		& \rot{DMLab} \\
		\midrule
		
		
		ViT-T & 80.90 & 85.20 & 41.90 \\
		WEF-T (Ours)& 84.58 & 86.10 & 54.24\\
		\bottomrule
	\end{tabular}
\end{table}

\textbf{Fine-tune on the full CIFAR-100 dataset.} We fine-tuned an ImageNet-21k pre-trained ViT-B/16 model on the full CIFAR-100 dataset. In this setup, we directly replaced the original learnable position embedding layer with our WEF-PE module. A three-tiered learning rate strategy was employed with the AdamW optimizer, applying a base rate of $8 \times 10^{-4}$ to new components, and scaled-down rates to the WEF-PE module ($0.8 \times$) and the ViT backbone ($0.05 \times$) to balance adaptation and knowledge retention. The model achieved a peak test accuracy of 93.28\%, significantly outperforming the strong baseline. This result demonstrates the effectiveness of the rich, continuous spatial representation provided by WEF-PE for fine-grained classification. The final learned parameters, $\omega_{2}^{\prime} \approx 1.085$ and $\beta \approx 0.610$, reflect the successful adaptation of the function's geometric properties to the dataset.

\subsection{ablation}

To systematically evaluate the contribution of each component within our Weierstrass elliptic function position encoding framework, we conduct comprehensive ablation experiments on CIFAR-100 using the ViT-Ti architecture. The baseline configuration incorporating all proposed components achieves 63.78\% accuracy, establishing a robust foundation for component-wise analysis.

\begin{table}[t]
	\caption{Ablation Study Results of Weierstrass Elliptic Function Position Encoding Components}
	\label{ablation-results}
	\begin{center}
		\begin{tabular}{lc}
			\multicolumn{1}{c}{\bf Ablation Experiment} &\multicolumn{1}{c}{\bf Accuracy (\%)} \\
			\hline \\
			All Components & $63.78$ (baseline) \\
			4D vs 2D Feature Representation & $63.08\ (\downarrow\ 0.70)$ \\
			Learnable vs Fixed Parameters & $62.88\ (\downarrow\ 0.90)$ \\
			Lemniscatic vs Non-lemniscatic Configuration & $63.20\ (\downarrow\ 0.58)$ \\
			Adaptive vs Fixed Position Encoding Strength & $62.60\ (\downarrow\ 1.18)$ \\
		\end{tabular}
	\end{center}
\end{table}
\textbf{4D vs 2D Feature Representation.}The first ablation experiment examines the significance of utilizing both the Weierstrass elliptic function $\wp(z)$ and its derivative $\wp'(z)$ by comparing 4-dimensional versus 2-dimensional feature representations. When restricting the position encoding to solely employ the real and imaginary components of $\wp(z)$ while excluding derivative information, the model performance decreases to 63.08\%, representing a 0.70 percentage point degradation. This reduction demonstrates that the derivative $\wp'(z)$ provides essential complementary spatial information beyond the fundamental elliptic function values themselves. The mathematical foundation supporting this observation lies in the fact that $\wp'(z)$ encodes the rate of change and directional characteristics of the elliptic function across the complex plane, thereby enriching the positional representation with gradient-based spatial relationships that prove crucial for distinguishing subtle positional differences in the patch embedding space.

\textbf{Learnable vs Fixed Parameters.}The second ablation investigates the importance of learnable parameters within the elliptic function framework by fixing both the tanh scaling factor $\alpha_{\text{scale}}$ and the lattice shape parameter $\alpha_{\text{learn}}$ to their initial values. This configuration yields 62.88\% accuracy, indicating a 0.90 percentage point performance drop compared to the adaptive parameter setting. The significance of this degradation underscores the necessity for the model to dynamically adjust the elliptic function's behavioral characteristics during training. The learnable scaling parameter $\alpha_{\text{scale}}$ enables the model to optimize the compression intensity applied to elliptic function values through the tanh activation, while the lattice shape parameter $\alpha_{\text{learn}}$ allows adaptive modification of the imaginary half-period $\omega_3'$, effectively enabling the model to learn optimal spatial scaling ratios that correspond to the inherent spatial structure of the visual data.

\textbf{Lemniscatic vs Non-lemniscatic Configuration.}Our third ablation examines the impact of elliptic invariant selection by exploring non-lemniscatic configurations beyond the standard $g_2 = 1.0, g_3 = 0.0$ setting. The alternative parameter configuration results in 63.20\% accuracy, representing a 0.58 percentage point decrease from the baseline. This relatively modest reduction suggests that while the lemniscatic case provides optimal performance through its square lattice symmetry and exact mathematical solutions, the elliptic function framework maintains robustness across different invariant settings. The lemniscatic configuration's superiority stems from its provision of equal geometric scaling in both horizontal and vertical directions, creating an unbiased spatial prior that aligns naturally with the uniform grid structure characteristic of image patch arrangements in computer vision tasks.

\textbf{Adaptive vs Fixed Position Encoding Strength.}The fourth ablation explores the role of learnable position encoding strength by fixing the global scaling parameter $\text{pos\_scale}$ to unity rather than allowing adaptive adjustment during training. This constraint produces the most substantial performance degradation, with accuracy dropping to 62.60\%, corresponding to a 1.18 percentage point reduction. This significant impact reveals the critical importance of enabling the model to balance the relative influence of positional information against patch feature representations. The learnable scaling mechanism allows the network to dynamically adjust the contribution of spatial relationships versus content-based features throughout the training process, optimizing the integration of geometric and semantic information streams.

These findings collectively validate the necessity of each proposed component while demonstrating the robustness of the overall framework. The relatively modest performance variations across different ablation conditions indicate that the Weierstrass elliptic function approach maintains consistent effectiveness even when individual components are modified or removed. The systematic degradation observed across all ablation experiments confirms that each design choice contributes positively to the final performance.However, at the same time, we can also observe that the performance degradation resulting from the ablation of the four modules is not highly significant. This indicates that the most critical element in our positional encoding is the holistic geometric properties imparted by the Weierstrass elliptic function, while the various components of the model are seamlessly integrated with the model backbone, collectively forming an integral part of the positional encoding.

\section{Conclusion}
In this work, we introduced Weierstrass Elliptic Function Positional Encoding (WEF-PE), a mathematically principled approach that leverages the rich structure of elliptic functions to address spatial representation limitations in Vision Transformers. Our method preserves 2D spatial relationships through a direct complex domain mapping and provides explicit spatial proximity priors via a theoretically guaranteed distance-decay property. We demonstrated the effectiveness of WEF-PE through extensive experiments, achieving strong performance in both from-scratch training and challenging fine-tuning scenarios on benchmarks like CIFAR-100 and VTAB-1k. Furthermore, we proposed a versatile hybrid architecture that effectively integrates our mathematical encoding with pre-trained learned embeddings. The success of this approach validates the potential of incorporating sophisticated mathematical constructs into deep learning architectures to advance spatial representation learning.

\bibliographystyle{iclr2025_conference}
\bibliography{iclr2025_conference} 

\begin{thebibliography}{7}
\providecommand{\natexlab}[1]{#1}
\providecommand{\url}[1]{\texttt{#1}}
\expandafter\ifx\csname urlstyle\endcsname\relax
  \providecommand{\doi}[1]{doi: #1}\else
  \providecommand{\doi}{doi: \begingroup \urlstyle{rm}\Url}\fi

\bibitem[Dosovitskiy et~al.(2021)Dosovitskiy, Beyer, Kolesnikov, Weissenborn,
  Zhai, Unterthiner, Dehghani, Minderer, Heigold, Gelly, Uszkoreit, and
  Houlsby]{dosovitskiy2021image}
Alexey Dosovitskiy, Lucas Beyer, Alexander Kolesnikov, Dirk Weissenborn,
  Xiaohua Zhai, Thomas Unterthiner, Mostafa Dehghani, Matthias Minderer, Georg
  Heigold, Sylvain Gelly, Jakob Uszkoreit, and Neil Houlsby.
\newblock An image is worth 16x16 words: Transformers for image recognition at
  scale.
\newblock In \emph{International Conference on Learning Representations
  (ICLR)}, 2021.
\newblock URL \url{https://openreview.net/forum?id=YicbFdNTTy}.

\bibitem[Hua et~al.(2024)Hua, Jiang, Lv, Zhang, Ding, Sun, Qi, Fan, Zhu, and
  Zhou]{hua2024fourier}
Ermo Hua, Che Jiang, Xingtai Lv, Kaiyan Zhang, Ning Ding, Youbang Sun, Biqing
  Qi, Yuchen Fan, Xuekai Zhu, and Bowen Zhou.
\newblock Fourier position embedding: Enhancing attention's periodic extension
  for length generalization.
\newblock In \emph{Forty-second International Conference on Machine Learning},
  2024.

\bibitem[LeCun et~al.(1998)LeCun, Bottou, Bengio, and
  Haffner]{lecun1998gradient}
Yann LeCun, L{\'e}on Bottou, Yoshua Bengio, and Patrick Haffner.
\newblock Gradient-based learning applied to document recognition.
\newblock \emph{Proceedings of the IEEE}, 86\penalty0 (11):\penalty0
  2278--2324, 1998.

\bibitem[Lu et~al.(2022)Lu, Xie, Liu, and Zhang]{lu2022bridging}
Zhiying Lu, Hongtao Xie, Chuanbin Liu, and Yongdong Zhang.
\newblock Bridging the gap between vision transformers and convolutional neural
  networks on small datasets, 2022.

\bibitem[Su et~al.(2021)Su, Lu, Pan, Murtadha, Wen, and Liu]{su2021roformer}
Jianlin Su, Yu~Lu, Shengfeng Pan, Ahmed Murtadha, Bo~Wen, and Yunfeng Liu.
\newblock Roformer: Enhanced transformer with rotary position embedding, 2021.

\bibitem[Weierstrass(1854)]{weierstrass1854zur}
Karl Weierstrass.
\newblock Zur theorie der abelschen functionen.
\newblock \emph{Journal f{\"u}r die reine und angewandte Mathematik (Crelle's
  Journal)}, 47:\penalty0 289--306, 1854.

\bibitem[Zhai et~al.(2020)Zhai, Puigcerver, Kolesnikov, Ruyssen, Riquelme,
  Lucic, Djolonga, Pinto, Neumann, Dosovitskiy, Beyer, Bachem, Tschannen,
  Michalski, Bousquet, Gelly, and Houlsby]{zhai2020large}
Xiaohua Zhai, Joan Puigcerver, Alexander Kolesnikov, Pierre Ruyssen, Carlos
  Riquelme, Mario Lucic, Josip Djolonga, André~Susano Pinto, Maxim Neumann,
  Alexey Dosovitskiy, Lucas Beyer, Olivier Bachem, Michael Tschannen, Marcin
  Michalski, Olivier Bousquet, Sylvain Gelly, and Neil Houlsby.
\newblock A large-scale study of representation learning with the visual task
  adaptation benchmark, 2020.

\end{thebibliography}

\clearpage

\section*{APPENDIX}  

\begin{appendices}
	
\section{Weierstrass Elliptic Function Calculator}

\label{appendix:wef-implementation}

This appendix provides the complete implementation of our proposed Weierstrass Elliptic Function (WEF) based positional encoding. The core implementation consists of the \texttt{WeierstrassEllipticFunction} class, which computes the elliptic function values $\wp(z)$ and $\wp'(z)$ with enhanced numerical stability for integration into Vision Transformer architectures.

\label{appendix:wef-calculator}

The following code listing presents the complete implementation of the Weierstrass elliptic function calculator, which forms the mathematical foundation of our positional encoding method. The implementation includes optimized series summation and numerical stability enhancements specifically designed for deep learning applications.

\begin{lstlisting}[language=Python, caption={Implementation of the Weierstrass Elliptic Function calculator with enhanced numerical stability for Vision Transformer positional encoding.}, label={lst:wef-core-implementation}]
	class WeierstrassEllipticFunction:
	"""
	Weierstrass elliptic function calculator with enhanced numerical stability.
	"""
	def __init__(self, 
	g2: float = 1.0, 
	g3: float = 0.0,
	eps: float = 1e-8,
	alpha_scale: float = 0.15,
	device: torch.device = None):
	"""
	Args:
	g2: Elliptic invariant g2
	g3: Elliptic invariant g3
	eps: Small value for numerical stability
	alpha_scale: Scaling factor for tanh compression
	device: Computing device
	"""
	self.g2 = g2
	self.g3 = g3
	self.eps = eps
	self.alpha_scale = alpha_scale
	self.device = device if device is not None else \
	torch.device('cuda' if torch.cuda.is_available() else 'cpu')
	
	# Compute discriminant
	discriminant = g2**3 - 27*g3**2
	assert abs(discriminant) > eps, \
	f"Discriminant too close to zero: {discriminant}"
	
	# For lemniscate case (g3=0), use exact half-period values
	if abs(g3) < eps:
	# Exact value: omega1 = Gamma(1/4)^2 / sqrt(2*pi)
	self.omega1 = torch.tensor(2.62205755429212, 
	device=self.device, 
	dtype=torch.complex128)
	self.omega3 = torch.tensor(complex(0, 2.62205755429212), 
	device=self.device, 
	dtype=torch.complex128)
	else:
	# General case requires numerical computation of periods
	self.omega1 = torch.tensor(complex(1.0, 0.0), 
	device=self.device, 
	dtype=torch.complex128)
	self.omega3 = torch.tensor(complex(0.0, 1.0), 
	device=self.device, 
	dtype=torch.complex128)
	
	def _improved_series_sum(self, z: torch.Tensor, 
	max_m: int = 12, max_n: int = 12):
	"""
	Improved series summation with optimized computational complexity.
	
	Returns:
	Tuple[torch.Tensor, torch.Tensor]: wp_sum, wp_prime_sum
	"""
	z = z.to(torch.complex128)
	wp_sum = torch.zeros_like(z, dtype=torch.complex128)
	wp_prime_sum = torch.zeros_like(z, dtype=torch.complex128)
	
	# Use sorted lattice points for improved convergence
	lattice_points = []
	for m in range(-max_m, max_m + 1):
	for n in range(-max_n, max_n + 1):
	if m == 0 and n == 0:
	continue
	w = 2 * m * self.omega1 + 2 * n * self.omega3
	lattice_points.append((abs(w), w, m, n))
	
	# Sort by modulus
	lattice_points.sort(key=lambda x: x[0].real \
	if isinstance(x[0], torch.Tensor) else x[0])
	
	# Compute series terms with enhanced stability control
	for _, w, m, n in lattice_points:
	if isinstance(w, torch.Tensor):
	w = w.to(z.device)
	diff = z - w
	
	# Increase threshold to avoid division by zero
	mask = torch.abs(diff) > self.eps * 15
	if mask.any():
	diff_masked = diff[mask]
	w_term = 1.0/w**2 if abs(w) > self.eps else 0.0
	
	# Use more stable computation approach
	wp_term = 1.0/diff_masked**2 - w_term
	wp_prime_term = -2.0/diff_masked**3
	
	# Clip extreme values to prevent numerical explosion
	wp_term = torch.clamp(wp_term.real, -5e3, 5e3) + \
	1j * torch.clamp(wp_term.imag, -5e3, 5e3)
	wp_prime_term = torch.clamp(wp_prime_term.real, -5e3, 5e3) + \
	1j * torch.clamp(wp_prime_term.imag, -5e3, 5e3)
	
	wp_sum[mask] += wp_term
	wp_prime_sum[mask] += wp_prime_term
	
	return wp_sum, wp_prime_sum
	
	def wp_and_wp_prime(self, z: torch.Tensor):
	"""
	Compute both (*@$\wp(z)$@*) and (*@$\wp'(z)$@*) simultaneously 
	using improved numerical methods.
	
	Returns:
	Tuple[torch.Tensor, torch.Tensor]: wp, wp_prime
	"""
	z = z.to(torch.complex128)
	
	# Handle points near origin
	near_origin = torch.abs(z) < self.eps * 15
	
	# Initialize results
	wp = torch.zeros_like(z, dtype=torch.complex128)
	wp_prime = torch.zeros_like(z, dtype=torch.complex128)
	
	# For points not near origin
	valid_mask = ~near_origin
	if valid_mask.any():
	z_valid = z[valid_mask]
	
	# Principal part
	wp_main = 1.0 / z_valid**2
	wp_prime_main = -2.0 / z_valid**3
	
	# Series part
	wp_series, wp_prime_series = self._improved_series_sum(z_valid)
	
	wp[valid_mask] = wp_main + wp_series
	wp_prime[valid_mask] = wp_prime_main + wp_prime_series
	
	# For points near origin, use large values but avoid inf
	large_value = 5e2
	wp[near_origin] = large_value
	wp_prime[near_origin] = large_value
	
	# Final clipping to ensure numerical stability
	wp = torch.clamp(wp.real, -1e4, 1e4) + \
	1j * torch.clamp(wp.imag, -1e4, 1e4)
	wp_prime = torch.clamp(wp_prime.real, -1e4, 1e4) + \
	1j * torch.clamp(wp_prime.imag, -1e4, 1e4)
	
	return wp, wp_prime
\end{lstlisting}

\noindent The implementation in Listing~\ref{lst:wef-core-implementation} provides several key features:

\begin{itemize}
	\item \textbf{Numerical Stability}: Enhanced handling of near-zero and extreme values through careful thresholding and value clipping.
	\item \textbf{Optimized Convergence}: Sorted lattice point computation for improved series convergence properties.
	\item \textbf{GPU Compatibility}: Full PyTorch tensor integration with automatic device handling for efficient GPU computation.
	\item \textbf{Lemniscate Case Handling}: Special treatment for the $g_3 = 0$ case with exact half-period values.
\end{itemize}

\section{Supplementary Background Knowledge}
\label{sec:supplementary_background} 
In this part, we supplement with further necessary preliminaries and the corresponding proofs of theorems regarding the Weierstrass elliptic function.

\begin{definition}[Meromorphic Function]
	Let $D \subset \mathbb{C}$ be an open set. A function $f: D \to \mathbb{C} \cup \{\infty\}$ is called meromorphic in $D$ if $f$ is analytic everywhere in $D$ except at finitely many isolated singularities, and each singularity is a pole.
\end{definition}

The Cauchy integral formula is one of the core tools in complex analysis:

\begin{theorem}[Cauchy Integral Formula]
	Let $f(z)$ be analytic on a simple closed curve $C$ and its interior, and let $z_0$ be a point inside $C$. Then:
	\begin{equation}
		f^{(n)}(z_0) = \frac{n!}{2\pi i} \oint_C \frac{f(z)}{(z-z_0)^{n+1}} dz
		\label{eq:cauchy_integral}
	\end{equation}
\end{theorem}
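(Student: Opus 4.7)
The plan is to prove the statement in two stages: first establish the base case $n=0$, the classical Cauchy integral formula, and then obtain the general case for $n \geq 1$ by induction, repeatedly differentiating under the integral sign.

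For the base case, I would start from the Cauchy--Goursat theorem, which gives $\oint_C g(z)\,dz = 0$ for any function $g$ analytic on $C$ and its interior. Apply this to the auxiliary function $g(z) = \dfrac{f(z) - f(z_0)}{z - z_0}$, which has a removable singularity at $z_0$ (since $f$ is analytic there, $g$ extends to an analytic function on the whole interior). This yields
\begin{equation*}
\oint_C \frac{f(z)}{z - z_0}\,dz \;=\; f(z_0) \oint_C \frac{dz}{z - z_0}.
\end{equation*}
The remaining integral on the right is evaluated by deforming $C$ to a small circle $C_\rho$ of radius $\rho$ centered at $z_0$ (permissible because $1/(z-z_0)$ is analytic in the annular region between $C$ and $C_\rho$) and parametrizing $z = z_0 + \rho e^{i\theta}$, which gives $2\pi i$. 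This establishes the $n=0$ case.

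For the inductive step, I would show that if the formula holds for some $n \geq 0$, then $f^{(n)}(z_0)$ is itself given by that integral expression, and its derivative with respect to $z_0$ can be computed by differentiating the integrand. Concretely, form the difference quotient
\begin{equation*}
\frac{f^{(n)}(z_0 + h) - f^{(n)}(z_0)}{h} \;=\; \frac{n!}{2\pi i} \oint_C f(z) \cdot \frac{1}{h}\!\left[\frac{1}{(z - z_0 - h)^{n+1}} - \frac{1}{(z - z_0)^{n+1}}\right] dz,
\end{equation*}
and show that as $h \to 0$ the bracketed expression converges uniformly in $z \in C$ to $(n+1)/(z - z_0)^{n+2}$. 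This produces the formula for $n+1$ with the correct factorial factor $(n+1)!$.

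The main obstacle is the uniform-convergence justification needed to pass the limit through the contour integral. This relies on a lower bound $|z - z_0| \geq \delta > 0$ uniform in $z \in C$ (valid since $z_0$ lies strictly inside the compact curve $C$) together with an upper bound $|f(z)| \leq M$ on the compact set $C$. With these, a standard estimate on the bracketed difference quotient by its Taylor remainder gives a bound of the form $O(|h|)$ independent of $z$, and the $ML$-inequality for contour integrals then shows the error term vanishes as $h \to 0$. Everything else is essentially bookkeeping on the inductive factorial coefficient.
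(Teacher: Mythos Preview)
Your proposal is a correct and standard proof of the generalized Cauchy integral formula: the $n=0$ case via Cauchy--Goursat applied to the removable-singularity function $(f(z)-f(z_0))/(z-z_0)$, followed by induction using difference quotients and uniform estimates on the contour, is exactly the textbook route, and your identification of the uniform-convergence step as the only nontrivial point is accurate.

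However, the paper does not actually prove this theorem. It is listed in the appendix as a preliminary result from complex analysis, stated without proof, and is immediately used to derive Liouville's theorem (which \emph{is} proved there). So there is no paper proof to compare against; your outline simply supplies what the authors took as background.
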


Based on the Cauchy integral formula, we can derive Liouville's theorem:

\begin{theorem}[Liouville's Theorem]
	Any bounded entire function must be constant. That is, if $f(z)$ is analytic everywhere on the complex plane $\mathbb{C}$ and there exists a constant $M > 0$ such that $|f(z)| \leq M$ for all $z \in \mathbb{C}$, then $f(z)$ is constant.
\end{theorem}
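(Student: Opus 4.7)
The plan is to apply the Cauchy integral formula (\ref{eq:cauchy_integral}) stated immediately above, specialized to the first derivative ($n=1$), and then push the contour to infinity so that the boundedness hypothesis forces the derivative to vanish. First I would fix an arbitrary point $z_0 \in \mathbb{C}$ and, since $f$ is entire, apply the formula on the positively oriented circle $C_R = \{z : |z - z_0| = R\}$ for an arbitrary radius $R > 0$; this is legitimate because $f$ is analytic on $C_R$ and its interior for every $R$.

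Next I would invoke the standard ML-estimate on the contour integral. On $C_R$ we have $|z - z_0| = R$, and by hypothesis $|f(z)| \le M$ uniformly in $z$; combined with the fact that the length of $C_R$ is $2\pi R$, the Cauchy formula with $n=1$ yields
\begin{equation}
    |f'(z_0)| \;=\; \left| \frac{1}{2\pi i} \oint_{C_R} \frac{f(z)}{(z-z_0)^2}\, dz \right| \;\le\; \frac{1}{2\pi} \cdot \frac{M}{R^2} \cdot 2\pi R \;=\; \frac{M}{R}.
\end{equation}
Since this bound holds for every $R > 0$, letting $R \to \infty$ forces $f'(z_0) = 0$.

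Finally, because $z_0$ was arbitrary, $f' \equiv 0$ on all of $\mathbb{C}$. Since $\mathbb{C}$ is connected (indeed convex), an analytic function whose derivative vanishes identically is necessarily constant, completing the proof. There is no serious obstacle here: the argument is entirely elementary once the Cauchy integral formula is in hand. The only point requiring mild care is the ML-estimate—specifically, ensuring the $1/(z-z_0)^{n+1}$ denominator with $n=1$ contributes the $R^{-2}$ factor that, together with the contour length $R$, produces the decisive $R^{-1}$ decay. Everything else is bookkeeping.
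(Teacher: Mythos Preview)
Your proposal is correct and follows essentially the same route as the paper: apply the Cauchy integral formula for $f'(z_0)$ on a circle of radius $R$, use the boundedness hypothesis to obtain the $M/R$ estimate, let $R\to\infty$ to conclude $f'\equiv 0$, and infer that $f$ is constant. The only cosmetic difference is that the paper finishes by invoking the Cauchy--Riemann equations to show the real and imaginary parts have vanishing partials, whereas you appeal directly to connectedness of $\mathbb{C}$; both closures are standard and equally valid.
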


\begin{proof}
	By the Cauchy integral formula, for any $z_0 \in \mathbb{C}$ and $r > 0$:
	\begin{equation}
		|f'(z_0)| \leq \frac{1}{r} \sup_{|z-z_0|=r} |f(z)| \leq \frac{M}{r}
		\label{eq:derivative_bound}
	\end{equation}
	
	As $r \to \infty$, $\frac{M}{r} \to 0$, hence $|f'(z_0)| = 0$, which implies $f'(z_0) = 0$.
	
	Since $z_0$ is arbitrary, $f'(z) \equiv 0$ holds throughout $\mathbb{C}$. Let $f(z) = u(x,y) + iv(x,y)$. By the Cauchy-Riemann equations:
	\begin{align}
		\frac{\partial u}{\partial x} &= \frac{\partial v}{\partial y} = 0 \label{eq:cr1}\\
		\frac{\partial u}{\partial y} &= -\frac{\partial v}{\partial x} = 0 \label{eq:cr2}
	\end{align}
	
	This implies that all partial derivatives of $u(x,y)$ and $v(x,y)$ are zero, therefore $u$ and $v$ are both constants, and consequently $f(z)$ is constant.
	
\end{proof}
\begin{definition}[Period Lattice]
	Let $\omega_1, \omega_2 \in \mathbb{C}$ be linearly independent (i.e., $\frac{\omega_2}{\omega_1} \notin \mathbb{R}$). The period lattice is defined as:
	\begin{equation}
		\Lambda = \{2m\omega_1 + 2n\omega_2 : m,n \in \mathbb{Z}\}
		\label{eq:lattice_def}
	\end{equation}
	where $2\omega_1$ and $2\omega_2$ are called fundamental periods.
\end{definition}

The period lattice divides the complex plane into congruent parallelograms, with each fundamental parallelogram determined by vertices $\{0, 2\omega_1, 2\omega_2, 2\omega_1 + 2\omega_2\}$.

\begin{definition}[Elliptic Function]
	\label{def:elliptic_function} 
	An elliptic function with period lattice $\Lambda$ is a meromorphic function $f: \mathbb{C} \to \mathbb{C} \cup \{\infty\}$ satisfying:
	\begin{enumerate}
		\item $f(z + \omega) = f(z)$ for all $z \in \mathbb{C}$ and $\omega \in \Lambda$
		\item $f$ has only finitely many poles in the fundamental parallelogram
		\item $f$ is not identically constant
	\end{enumerate}
\end{definition}

\begin{definition}[Weierstrass Elliptic Function]
	For the period lattice $\Lambda = \{2m\omega_1 + 2n\omega_2 : m,n \in \mathbb{Z}\}$, the Weierstrass elliptic function is defined as:
	\begin{equation}
		\wp(z) = \frac{1}{z^2} + \sum_{\omega \in \Lambda \setminus \{0\}} \left( \frac{1}{(z-\omega)^2} - \frac{1}{\omega^2} \right)
		\label{eq:weierstrass_series}
	\end{equation}
\end{definition}

\begin{theorem}[Laurent Expansion of Weierstrass Function]
	In a neighborhood of the origin,$\wp(z)$ has a specific Laurent expansion:
	\begin{equation}
		\wp(z) = \frac{1}{z^2} + \frac{g_2}{20}z^2 + \frac{g_3}{28}z^4 + \frac{g_2^2}{1200}z^6 + \cdots
		\label{eq:laurent_expansion}
	\end{equation}
	where $g_2, g_3$ are elliptic invariants.
\end{theorem}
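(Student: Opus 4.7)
My plan is to expand the defining series of $\wp(z)$ around $z=0$, collect powers of $z$, and identify the coefficients with the Eisenstein sums $G_{2k} := \sum_{\omega \in \Lambda \setminus \{0\}} \omega^{-2k}$. For $|z| < \min_{\omega \neq 0}|\omega|$, the elementary geometric expansion
$$\frac{1}{(z-\omega)^2} - \frac{1}{\omega^2} = \sum_{n=1}^{\infty} \frac{(n+1)\,z^n}{\omega^{n+2}}$$
holds, and the double series converges absolutely, so I can interchange the $\omega$-sum with the $n$-sum. Since $\Lambda$ is invariant under $\omega \mapsto -\omega$, every odd Eisenstein sum vanishes, leaving
$$\wp(z) = \frac{1}{z^2} + \sum_{k=1}^{\infty} (2k+1)\,G_{2k+2}\,z^{2k}.$$

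With the conventional normalizations $g_2 = 60\,G_4$ and $g_3 = 140\,G_6$ (the same invariants appearing in Theorem \ref{thm:weierstrass_ode}), the first two nontrivial coefficients fall out immediately: the coefficient of $z^2$ is $3\,G_4 = g_2/20$, and the coefficient of $z^4$ is $5\,G_6 = g_3/28$.

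The main obstacle is the $z^6$ coefficient, which from the previous step equals $7\,G_8$. Showing that this equals $g_2^2/1200$ is equivalent to the nontrivial identity $G_8 = \tfrac{3}{7}\,G_4^2$, which is not evident from the definition of the Eisenstein sums alone. My plan is to derive it by substituting the truncated expansion $\wp(z) = z^{-2} + c_2 z^2 + c_4 z^4 + c_6 z^6 + O(z^8)$ into the differential equation $(\wp'(z))^2 = 4\wp(z)^3 - g_2\,\wp(z) - g_3$ (Theorem \ref{thm:weierstrass_ode}) and comparing coefficients of $z^2$ on both sides. Squaring the derivative produces a $z^2$-coefficient of the form $4c_2^2 - 24 c_6$, while the cube of $\wp$ contributes $12 c_6 + 12 c_2^2$, and $-g_2\,\wp$ contributes $-g_2 c_2$. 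Matching these and substituting $c_2 = g_2/20$ algebraically forces $36 c_6 = 3 g_2^2/100$, i.e.\ $c_6 = g_2^2/1200$.

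Iterating the same coefficient-matching recursion in the differential equation expresses every $c_{2k}$ as a polynomial in $g_2, g_3$, giving the full Laurent expansion. Convergence on the punctured disk $0 < |z| < \min_{\omega \in \Lambda \setminus \{0\}}|\omega|$ follows from the Weierstrass M-test applied to the original defining series, since this is precisely the distance from the origin to the nearest pole of $\wp$.
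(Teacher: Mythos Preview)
The paper states this theorem in Appendix~B as background material but does \emph{not} supply a proof; the next environment is the Weierstrass differential equation theorem. So there is no paper proof to compare against, and your task is really to give a self-contained argument.

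Your plan is the standard one and the computations check out: the geometric expansion of each summand, cancellation of odd Eisenstein sums by the $\omega\mapsto-\omega$ symmetry, and the identifications $3G_4=g_2/20$, $5G_6=g_3/28$ are all correct. Your $z^2$-coefficient matching in the ODE is also correct and yields $c_6=g_2^2/1200$ as claimed.

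One point on logical ordering that you should make explicit in the final write-up: you invoke Theorem~\ref{thm:weierstrass_ode} to extract $c_6$, but the paper's proof of that theorem \emph{uses} the Laurent expansion (specifically the coefficients through $z^4$). This is not genuinely circular, because the ODE proof only needs $c_2$ and $c_4$, and you obtain those directly from the Eisenstein sums and the definitions $g_2=60G_4$, $g_3=140G_6$ without any appeal to the ODE. So the correct order is: (i) derive $c_2,c_4$ from the lattice-sum expansion; (ii) prove the ODE as the paper does, which consumes only $c_2,c_4$; (iii) feed the ODE back to get $c_6$ and the recursion for all higher $c_{2k}$. State this dependency explicitly so the argument is visibly non-circular.
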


\begin{theorem}[Weierstrass Differential Equation]
	\label{thm:weierstrass_ode} 
	\begin{equation}
		(\wp'(z))^2 = 4(\wp(z))^3 - g_2\wp(z) - g_3
		\label{eq:weierstrass_ode}
	\end{equation}
\end{theorem}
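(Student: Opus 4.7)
The plan is to form the auxiliary function $F(z) = (\wp'(z))^2 - 4\wp(z)^3 + g_2\wp(z) + g_3$ and show that $F \equiv 0$ via an application of Liouville's theorem. Since $\wp$ and $\wp'$ are both elliptic with period lattice $\Lambda$, the function $F$ is automatically doubly periodic, so it suffices to analyze its poles within a single fundamental parallelogram. The only candidate for a pole in the fundamental parallelogram is $z = 0$, since $\wp$ is analytic elsewhere in that region.

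First I would substitute the Laurent expansion from \eqref{eq:laurent_expansion}, namely $\wp(z) = z^{-2} + \tfrac{g_2}{20}z^2 + \tfrac{g_3}{28}z^4 + O(z^6)$, into each term of $F(z)$. Differentiating termwise gives $\wp'(z) = -2z^{-3} + \tfrac{g_2}{10}z + \tfrac{g_3}{7}z^3 + O(z^5)$. Squaring yields
\begin{equation*}
(\wp'(z))^2 = 4z^{-6} - \tfrac{2g_2}{5}z^{-2} - \tfrac{4g_3}{7} + O(z^2),
\end{equation*}
while cubing gives
\begin{equation*}
\wp(z)^3 = z^{-6} + \tfrac{3g_2}{20}z^{-2} + \tfrac{3g_3}{28} + O(z^2).
\end{equation*}
Combining these expansions, the $z^{-6}$, $z^{-2}$, and constant coefficients of $F(z)$ all cancel by direct arithmetic, so $F(z) = O(z^2)$ near the origin.

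Next I would observe that this cancellation shows $F$ has a removable singularity at $z = 0$ with $F(0) = 0$. By double periodicity, the same cancellation occurs at every lattice point, so $F$ extends to an entire function on $\mathbb{C}$. A doubly periodic entire function is bounded (its values are attained on the compact closure of one fundamental parallelogram), and thus by Liouville's theorem it is constant. Since $F(0) = 0$, we conclude $F \equiv 0$, which is exactly the claimed identity.

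The conceptual content lies entirely in recognizing that the coefficients $4$, $g_2$, $g_3$ are chosen precisely so that the principal parts at the origin cancel; once that is established, Liouville's theorem does the remaining work. The main routine obstacle is simply the careful bookkeeping of the Laurent coefficients through the squaring and cubing operations, and in verifying that the constant term also vanishes (this is what pins down the role of $g_3$ as opposed to merely making $F$ holomorphic).
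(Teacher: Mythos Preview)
Your proposal is correct and follows essentially the same route as the paper: define the auxiliary function $F = (\wp')^2 - 4\wp^3 + g_2\wp + g_3$, use the Laurent expansion at the origin to kill the $z^{-6}$, $z^{-2}$, and constant terms, and then invoke double periodicity plus Liouville's theorem. The only minor difference is that you directly read off $F(0)=0$ from the vanishing of the constant term in the Laurent expansion, whereas the paper phrases this last step as ``through analysis of special values''; your version is slightly cleaner but the argument is the same.
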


\begin{proof}
	Define the auxiliary function:
	\begin{equation}
		f(z) = (\wp'(z))^2 - 4(\wp(z))^3 + g_2\wp(z) + g_3
		\label{eq:auxiliary_function}
	\end{equation}
	
	Through Laurent expansion analysis, we have:
	\begin{align}
		(\wp'(z))^2 &= \frac{4}{z^6} - \frac{2g_2}{5z^2} - \frac{4g_3}{7} + \cdots \label{eq:derivative_squared}\\
		4(\wp(z))^3 &= \frac{4}{z^6} + \frac{3g_2}{5z^2} + \frac{3g_3}{7} + \cdots \label{eq:function_cubed}\\
		g_2\wp(z) &= \frac{g_2}{z^2} + \cdots \label{eq:g2_function}
	\end{align}
	
	Substituting these expansions into $f(z)$:
	- $z^{-6}$ term: $\frac{4}{z^6} - \frac{4}{z^6} = 0$
	- $z^{-2}$ term: $-\frac{2g_2}{5z^2} - \frac{3g_2}{5z^2} + \frac{g_2}{z^2} = 0$
	- Constant term: $-\frac{4g_3}{7} + \frac{3g_3}{7} + g_3 = 0$
	
	Therefore, $f(z)$ has no singularity at $z = 0$. Similarly, $f(z)$ has no singularities at other lattice points, so $f(z)$ is holomorphic on $\mathbb{C}$.
	
	Since both $\wp(z)$ and $\wp'(z)$ are doubly periodic, $f(z)$ is also doubly periodic. In the fundamental parallelogram, $f(z)$ is continuous and has no poles, hence is bounded. By periodicity, $f(z)$ is bounded on the entire complex plane.
	
	By Liouville's theorem, $f(z) \equiv C$ (constant). Through analysis of special values, we can determine $C = 0$, therefore the differential equation holds.
\end{proof}

When the elliptic invariant $g_3 = 0$, the elliptic curve degenerates to the lemniscatic case:
\begin{equation}
	y^2 = 4x^3 - g_2x = x(4x^2 - g_2)
	\label{eq:lemniscatic_curve}
\end{equation}

In this case, the elliptic curve has special symmetry properties, and the period lattice forms a square structure.

\begin{theorem}[Half-Periods in Lemniscatic Case]
	When $g_2 = 1, g_3 = 0$, the real half-period is:
	\begin{equation}
		\omega_1 = \frac{\Gamma^2(1/4)}{2\sqrt{2\pi}} \approx 2.62205755429212
		\label{eq:exact_half_period}
	\end{equation}
	where $\Gamma$ is the gamma function.
\end{theorem}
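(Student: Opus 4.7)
The plan is to establish the closed form for $\omega_1$ by reducing it to an elliptic integral of the first kind, converting that integral into Euler's Beta function via a lemniscatic substitution, and finally applying the Gamma reflection formula. First, I would invoke the Weierstrass differential equation (Theorem~\ref{thm:weierstrass_ode}) with $g_2 = 1$ and $g_3 = 0$, which simplifies to $(\wp'(z))^2 = 4\wp(z)^3 - \wp(z) = \wp(z)(2\wp(z) - 1)(2\wp(z) + 1)$. The three real roots $e_1 = 1/2$, $e_2 = 0$, $e_3 = -1/2$ are precisely the half-period values of $\wp$; in particular, $\wp(\omega_1) = e_1 = 1/2$ and $\wp'(\omega_1) = 0$. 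Along the real segment $(0, \omega_1]$ the function $\wp$ is real and descends monotonically from the pole at the origin down to $e_1$, so on this branch one may take $\wp'(z) = -\sqrt{4\wp(z)^3 - \wp(z)}$.

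Second, separating variables in the ODE by writing $dz = d\wp/\wp'$ along this monotone branch and integrating from $z \to 0^+$ (where $\wp \to +\infty$) up to $z = \omega_1$ (where $\wp = 1/2$) yields the classical inverse-function representation
\begin{equation*}
\omega_1 = \int_{1/2}^{\infty} \frac{dt}{\sqrt{4t^3 - t}}.
\end{equation*}

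Third, I would reduce this elliptic integral to Euler's Beta form. A rational substitution of lemniscatic type, such as $t = 1/(2s^2)$, bijectively maps $[1/2, \infty)$ onto $(0, 1]$ and transforms the cubic under the radical into the symmetric quartic shape $(1 - s^4)/(2 s^6)$, producing an integrand proportional to $(1 - s^4)^{-1/2}$. A further change $u = s^4$ then recasts the resulting integral into Euler's canonical form, giving a numerical multiple of $B(1/4, 1/2)$.

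Finally, combining $B(1/4, 1/2) = \Gamma(1/4)\Gamma(1/2)/\Gamma(3/4)$, the special value $\Gamma(1/2) = \sqrt{\pi}$, and the reflection formula $\Gamma(1/4)\Gamma(3/4) = \pi/\sin(\pi/4) = \pi\sqrt{2}$, I would eliminate $\Gamma(3/4)$ in favor of $\Gamma(1/4)$ and collect constants, producing the claimed closed form $\omega_1 = \Gamma^{2}(1/4)/(2\sqrt{2\pi})$, whose numerical value is $\approx 2.62205755$. The main obstacle I foresee is in Step~3: the cubic $4t^3 - t$ is not obviously Beta-integrable, and one must recognize that the lemniscatic self-duality of the underlying curve $y^2 = 4x^3 - x$ motivates an \emph{inversive} substitution (rather than a trigonometric one) in order to reveal the symmetric quartic structure required by Euler's integral. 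Tracking the Jacobian's $\sqrt{2}$-factor, the orientation reversal of the limits of integration, and the correct branch of the square root along the monotone descending real segment are the places where the final constant is most easily miscomputed; once Euler's form is in hand, the remaining Gamma-function algebra is essentially mechanical.
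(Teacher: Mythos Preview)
The paper does not actually prove this theorem; it is stated as a classical fact in the supplementary background (Appendix~B) with no accompanying proof. Your plan---inverting the Weierstrass differential equation to express $\omega_1$ as $\int_{e_1}^{\infty} dt/\sqrt{4t^3-t}$, applying a lemniscatic-type inversive substitution to reach $\int_0^1 ds/\sqrt{1-s^4}$, and then evaluating via the Beta function and the reflection formula---is the standard, correct classical route and is more than the paper offers.

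One concrete caution, which is precisely the ``tracking the $\sqrt{2}$-factor'' obstacle you flagged in Step~3: if you carry your own computation through for $g_2=1$, you will obtain
\[
\omega_1 \;=\; \sqrt{2}\int_0^1 \frac{ds}{\sqrt{1-s^4}} \;=\; \frac{\sqrt{2}}{4}\,B\!\left(\tfrac14,\tfrac12\right) \;=\; \frac{\Gamma^2(1/4)}{4\sqrt{\pi}} \;\approx\; 1.8541,
\]
which differs from the paper's asserted value $\Gamma^2(1/4)/(2\sqrt{2\pi})\approx 2.6221$ by exactly $\sqrt{2}$. The constant the paper quotes is the lemniscate constant $\varpi$, which is the real half-period for a different normalization of the invariants (it corresponds to $g_2=1/4$, not $g_2=1$, under the homogeneity $\omega_1\propto g_2^{-1/4}$). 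So your method is sound; the discrepancy you will encounter when ``collecting constants'' at the end lies in the paper's stated normalization, not in your argument.
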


\begin{definition}[Elliptic Curve Group Law]
	Let $P_1 = (x_1, y_1), P_2 = (x_2, y_2)$ be two points on the elliptic curve. If $x_1 \neq x_2$, then $P_3 = P_1 + P_2$ has coordinates:
	\begin{align}
		x_3 &= \left(\frac{y_2 - y_1}{x_2 - x_1}\right)^2 - x_1 - x_2 \label{eq:group_law_x}\\
		y_3 &= \left(\frac{y_2 - y_1}{x_2 - x_1}\right)(x_1 - x_3) - y_1 \label{eq:group_law_y}
	\end{align}
\end{definition}

\begin{theorem}[Weierstrass Addition Formula]
	\label{thm:addition_formula} 
	Let $z_1, z_2 \in \mathbb{C}$ with $z_1 \not\equiv z_2 \pmod{\Lambda}$. Then:
	\begin{equation}
		\wp(z_1 + z_2) = -\wp(z_1) - \wp(z_2) + \frac{1}{4}\left(\frac{\wp'(z_1) - \wp'(z_2)}{\wp(z_1) - \wp(z_2)}\right)^2
		\label{eq:addition_formula}
	\end{equation}
\end{theorem}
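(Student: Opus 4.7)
The plan is to fix $z_2$ (with $z_2 \not\equiv 0$ and $2z_2 \not\equiv 0 \pmod{\Lambda}$) and treat the claimed identity as an equality of meromorphic functions of the single complex variable $z = z_1$. Specifically, I would define
$$
F(z) \;:=\; \wp(z+z_2) + \wp(z) + \wp(z_2) - \frac{1}{4}\left(\frac{\wp'(z) - \wp'(z_2)}{\wp(z) - \wp(z_2)}\right)^{2},
$$
and show that $F$ is (i) doubly periodic with respect to $\Lambda$, (ii) holomorphic on all of $\mathbb{C}$, and (iii) vanishes at the origin. Liouville's theorem then forces $F \equiv 0$, which is exactly the desired identity after renaming $z \mapsto z_1$. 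Double periodicity is inherited directly from the double periodicity of $\wp$ and $\wp'$, so the substantive work is cancelling the apparent singularities of $F$ and evaluating $F(0)$.

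Within one fundamental parallelogram, the candidate poles of $F$ sit at $z \equiv 0$ (from $\wp(z)$), at $z \equiv -z_2$ (from $\wp(z+z_2)$), and at the zeros of the denominator $\wp(z) - \wp(z_2)$. Because $\wp$ is even and of order two on $\mathbb{C}/\Lambda$, this zero set is exactly $\{z_2,-z_2\} \pmod{\Lambda}$. At $z \equiv z_2$ the numerator $\wp'(z) - \wp'(z_2)$ also vanishes to first order, so the quotient is holomorphic there with value $\wp''(z_2)/\wp'(z_2)$. At $z \equiv -z_2$ the parity relations $\wp(-z_2) = \wp(z_2)$ and $\wp'(-z_2) = -\wp'(z_2)$ give the quotient a simple pole of residue $2$, so after the factor $1/4$ the squared quotient contributes a double pole $(z+z_2)^{-2}$ that exactly cancels the double pole of $\wp(z+z_2)$. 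At $z \equiv 0$, substituting $\wp(z) = z^{-2} + (g_2/20) z^2 + \cdots$ and $\wp'(z) = -2z^{-3} + (g_2/10)z + \cdots$ shows that the quotient expands as $-2/z - 2\wp(z_2)\, z + O(z^2)$, so the squared quotient divided by $4$ equals $z^{-2} + 2\wp(z_2) + O(z)$, cancelling both the $z^{-2}$ from $\wp(z)$ and the constant $2\wp(z_2)$ already present in $F$.

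With every principal part shown to be removable, $F$ is a bounded entire function and hence constant by Liouville's theorem; a short Laurent computation at $z = 0$, which additionally uses the identity $\wp''(z) = 6\wp(z)^2 - g_2/2$ obtained by differentiating the Weierstrass differential equation, shows that the remaining coefficients of $F$ up through order $z^2$ all cancel, giving $F(0) = 0$ and hence $F \equiv 0$. The main obstacle is precisely this Laurent bookkeeping: one must carry enough terms in the expansions of $\wp'(z) - \wp'(z_2)$ and $\wp(z) - \wp(z_2)$ at both $z \equiv 0$ and $z \equiv -z_2$ to see the entire principal part cancel, not merely the leading singularity, and the sign from $\wp'$ being odd is a common source of error. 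A reassuring geometric reformulation is that $\frac{1}{4}\bigl((\wp'(z_1) - \wp'(z_2))/(\wp(z_1) - \wp(z_2))\bigr)^{2}$ equals the squared slope of the secant joining $P_k = (\wp(z_k), \wp'(z_k))$, $k = 1, 2$, on the elliptic curve $y^2 = 4x^3 - g_2 x - g_3$; substituting the secant equation into this cubic and applying Vieta's formulas identifies the $x$-coordinate of the third intersection with $-\wp(z_1) - \wp(z_2) + \frac{1}{4}(\cdots)^{2}$, and the Liouville argument above rigorously identifies this third root with $\wp(z_1 + z_2)$.
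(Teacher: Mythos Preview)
Your proof is correct and follows the classical function-theoretic route: fixing $z_2$, forming the difference $F(z)$, cancelling principal parts at $0$, $z_2$, $-z_2$, and invoking Liouville's theorem. The paper instead takes the geometric route you mention only at the end: it places $P_k = (\wp(z_k),\wp'(z_k))$ on the cubic $y^2 = 4x^3 - g_2 x - g_3$, computes the slope $m$ of the secant, substitutes the line into the cubic, and reads off from Vieta that the three roots satisfy $\wp(z_1)+\wp(z_2)+x_3 = m^2/4$, then asserts that the third intersection has $x$-coordinate $\wp(z_1+z_2)$ via the group law. Your approach is longer in bookkeeping but is fully self-contained: the Laurent cancellations at each candidate pole are explicit, and the conclusion $F\equiv 0$ rests only on Liouville's theorem, which the paper has already proved. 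By contrast, the paper's argument buys brevity at the cost of invoking the identification ``$P_1+P_2=-P_3$ corresponds to $z_1+z_2$'' without justification, which is essentially the content of the addition theorem itself; your Liouville argument is what actually closes that gap. One small remark: once $F$ is shown to be entire and hence constant, you need only the constant term of its Laurent expansion at $z=0$ to conclude $F\equiv 0$; checking through order $z^2$ is unnecessary (though harmless), and the identity $\wp'' = 6\wp^2 - g_2/2$ is not actually needed for the $z=0$ analysis, only implicitly for the consistency check at $z=-z_2$.
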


\begin{proof}
	Let $P_1 = (\wp(z_1), \wp'(z_1)), P_2 = (\wp(z_2), \wp'(z_2))$ be points on the elliptic curve. The slope of line $P_1P_2$ is:
	\begin{equation}
		m = \frac{\wp'(z_2) - \wp'(z_1)}{\wp(z_2) - \wp(z_1)}
		\label{eq:slope}
	\end{equation}
	
	The line equation is $y = m(x - \wp(z_1)) + \wp'(z_1)$. Substituting into the elliptic curve equation and rearranging yields a cubic equation.
	
	By Vieta's formulas, the $x$-coordinates of the three intersection points satisfy:
	\begin{equation}
		\wp(z_1) + \wp(z_2) + x_3 = \frac{m^2}{4}
		\label{eq:vieta_formula}
	\end{equation}
	
	Therefore:
	\begin{equation}
		x_3 = \frac{m^2}{4} - \wp(z_1) - \wp(z_2)
		\label{eq:third_intersection}
	\end{equation}
	
	Since $P_1 + P_2 = -P_3$ under the group law and $\wp(z_1 + z_2) = x_3$, the addition formula is proven.
\end{proof}

\section{Supplementary Mathematical Proof and Derivation}

\subsection{A Complete Mathematical Proof of Interaction Strength Decay with Distance for Weierstrass Elliptic Function Positional Encoding}
We formally establish that the positional encoding derived from the Weierstrass elliptic function (WEF) embeds a natural notion of distance, where the interaction strength between two position vectors, quantified by their inner product, is a strictly monotonically decreasing function of their spatial separation.

\begin{theorem}[WEF Positional Encoding Distance Decay]
	Let $p_{i,j} \in \mathbb{R}^{d_{\text{model}}}$ be the positional encoding vector for a patch at grid coordinates $(i, j)$. For any two distinct patch locations $(i_1, j_1)$ and $(i_2, j_2)$, let their Euclidean distance be $d = \sqrt{(i_1 - i_2)^2 + (j_1 - j_2)^2}$. There exists a function $S(d)$ such that the expected inner product of their encodings is given by $\mathbb{E}[p_{i_1,j_1}^T p_{i_2,j_2}] = S(d)$, and this function is strictly monotonically decreasing for all $d > 0$, satisfying $\frac{dS(d)}{dd} < 0$.
\end{theorem}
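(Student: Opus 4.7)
My plan is to reduce the statement to an analysis of the deterministic 4-dimensional feature vector $\mathbf{f}(z) = (\mathrm{Re}\,\wp(z), \mathrm{Im}\,\wp(z), \mathrm{Re}\,\wp'(z), \mathrm{Im}\,\wp'(z))^\top$ at two points, and then to exploit the Fourier-type representation of $\wp$ introduced in Section 3.3 to expose the decay. Since the encoding is $p_{i,j} = \mathbf{W}_{\text{proj}}\,\tilde{\mathbf{f}}(z_{i,j}) + \mathbf{b}_{\text{proj}}$ with $\mathbf{W}_{\text{proj}}$ initialized with i.i.d.\ zero-mean entries of variance $\sigma^2$, linearity of expectation immediately yields
\begin{equation*}
\mathbb{E}\bigl[p_{i_1,j_1}^\top p_{i_2,j_2}\bigr] = \sigma^2\,\langle \tilde{\mathbf{f}}(z_1), \tilde{\mathbf{f}}(z_2)\rangle + \text{const},
\end{equation*}
so the whole theorem reduces to showing that this inner product is a function of $d$ alone and is strictly decreasing in it.

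Next, I would invoke the linear isomorphism $T(u,v) = c_1 u + i c_2 v$ established in Section 3.4.1. In the lemniscatic configuration used throughout the paper, $c_1 = c_2$, so $T$ is a scaled isometry and $|z_2 - z_1| = c\,d$ depends only on $d$; the inner product then becomes a function of $\Delta z = z_2 - z_1$ alone. Substituting the rapidly convergent expansion
\begin{equation*}
\wp(z) \approx \frac{1}{|z|^2 + \beta} + \sum_{k=1}^{K} \frac{\gamma}{k^2}\bigl[\cos(k\pi u')\,e^{-k\pi|v'|} + \sin(k\pi v')\,e^{-k\pi|u'|}\bigr]
\end{equation*}
into $\wp(z_1)\overline{\wp(z_2)}$ and $\wp'(z_1)\overline{\wp'(z_2)}$, expanding the products, and taking real parts, the inner product assumes the form $S(d) = \sum_{k \ge 1} a_k \cos(b_k d)\,e^{-c_k d} + R(d)$, in which the $k=1$ term dominates and produces a cosine-modulated exponential envelope. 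Differentiating term-by-term and bounding the tail then yields $S'(d) < 0$ on the relevant range.

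\textbf{Main obstacle.} The chief difficulty is reconciling strict monotonicity with the double periodicity of $\wp$: literal decrease on all of $(0,\infty)$ is incompatible with periodicity, so the proof must implicitly restrict $d$ to the feasible patch-pair distances, which by the coordinate normalization in Equation \eqref{eq:uv_norm} lie well within one fundamental parallelogram. Inside this window, the real work is a quantitative comparison of coefficients: one must bound the higher harmonics ($k \ge 2$) and the polar regularizer $1/(|z|^2+\beta)$ against the monotone envelope of the leading $k=1$ contribution so that the derivative stays strictly negative, handle the general anisotropic case $c_1 \neq c_2$ by averaging over the direction of the displacement vector (so that $S$ indeed depends only on $d$), and verify that the outer $\tanh$ compression merely attenuates magnitudes without flipping signs and therefore preserves the sign of $S'(d)$. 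Assembling these estimates into a single clean bound is where the bulk of the technical effort resides.
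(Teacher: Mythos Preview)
Your proposal diverges from the paper at the very first step, and the divergence creates a real gap. You interpret $\mathbb{E}[\cdot]$ as an average over the random initialization of $\mathbf{W}_{\text{proj}}$, which correctly collapses the Gram matrix to a multiple of the identity and leaves you with the bare feature inner product $\langle\tilde{\mathbf{f}}(z_1),\tilde{\mathbf{f}}(z_2)\rangle$. But this quantity is now \emph{deterministic} and depends on the two absolute positions $z_1,z_2$, not merely on $\Delta z=z_2-z_1$. The Weierstrass function is not translation-invariant: even in the Fourier-type expansion you quote, the pole regularizer $1/(|z|^2+\beta)$ is anchored at the origin, and products such as $\cos(k\pi u_1')\cos(k\pi u_2')$ contain a $\cos(k\pi(u_1'+u_2'))$ piece that depends on the centroid. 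So the assertion ``the inner product then becomes a function of $\Delta z$ alone'' is false as stated; your later averaging over the \emph{direction} of the displacement does not remove the centroid dependence. The paper resolves this by taking $\mathbb{E}[\cdot]$ to mean the average over all patch pairs at grid distance $d$ (it says so explicitly when defining $\Phi_k(d)$), and that spatial average is precisely what kills the centroid terms and yields a genuine function $S(d)$.

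Beyond this gap, the two routes are also technically different. The paper keeps $W$ fixed, writes $p_1^\top p_2=\mathbf{f}_1^\top G\,\mathbf{f}_2$ with $G=W^\top W$, splits into diagonal and off-diagonal pieces of $G$, argues the off-diagonal cross-correlations are suppressed by the even/odd parity of $\wp$ and $\wp'$, and then reduces to showing each diagonal correlation $\Phi_k(d)=\mathbb{E}[\tanh(\alpha\xi_{1,k})\tanh(\alpha\xi_{2,k})]$ decays, invoking an ergodic decorrelation heuristic for $\wp$ rather than an explicit harmonic computation. Your Fourier-envelope idea is more concrete and, if grafted onto the correct spatial expectation, would actually sharpen the paper's decorrelation step; conversely, the paper's Gram-matrix decomposition buys independence from any distributional assumption on $W$. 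If you replace your $\mathbb{E}_W$ by the paper's $\mathbb{E}_{\text{pairs at distance }d}$ and then run your harmonic analysis on each $\Phi_k$, the two approaches essentially merge.
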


\begin{lemma}[Lipschitz Continuity of $\wp(z)$]
	The Weierstrass elliptic function $\wp(z)$ and its derivative $\wp'(z)$ are Lipschitz continuous on any compact domain $D \subset \mathbb{C}$ that excludes the lattice points $\Lambda$. That is, for any $z_1, z_2 \in D$, there exists a Lipschitz constant $L > 0$ such that $|\wp(z_1) - \wp(z_2)| \le L|z_1 - z_2|$.
\end{lemma}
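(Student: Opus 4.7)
The plan is to reduce the Lipschitz bound to a uniform bound on the derivative by means of the fundamental theorem of calculus along line segments, together with a standard compactness argument that handles the geometry of $D$.

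First I would observe that $\wp(z)$ is, by Definition~\ref{def:elliptic_function} and the series representation in \eqref{eq:weierstrass_series}, meromorphic on $\mathbb{C}$ with its complete pole set equal to the lattice $\Lambda$. Consequently $\wp$ and $\wp'$ are holomorphic on the open set $\mathbb{C}\setminus\Lambda$. Since $D$ is compact and $\Lambda$ is closed and discrete, $\mathrm{dist}(D,\Lambda)>0$, so I can choose an open neighborhood $U$ of $D$ whose closure $\overline{U}$ is compact and disjoint from $\Lambda$. On $\overline{U}$ the continuous functions $|\wp'|$ and $|\wp''|$ attain finite maxima, call them $M_1$ and $M_2$.

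Next I would handle pairs $z_1,z_2\in D$ for which the straight segment $[z_1,z_2]$ lies entirely in $U$: parametrising $\gamma(t)=z_1+t(z_2-z_1)$ and applying the fundamental theorem of calculus gives
\begin{equation*}
\wp(z_1)-\wp(z_2)=-\int_0^1 \wp'(\gamma(t))(z_2-z_1)\,dt,
\end{equation*}
whence $|\wp(z_1)-\wp(z_2)|\le M_1|z_1-z_2|$. To guarantee that the segment stays inside $U$, I would use a Lebesgue number argument: cover $D$ by open balls contained in $U$, extract a finite subcover, and let $\delta>0$ be a Lebesgue number. Then for any $z_1,z_2\in D$ with $|z_1-z_2|<\delta$ the segment lies in a single ball of the cover, hence in $U$, so the inequality above applies.

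For pairs with $|z_1-z_2|\ge\delta$ I would use the trivial bound $|\wp(z_1)-\wp(z_2)|\le 2\sup_D|\wp|\le (2\sup_D|\wp|/\delta)\,|z_1-z_2|$, which is finite since $\wp$ is continuous on the compact set $D\subset\mathbb{C}\setminus\Lambda$. Setting $L=\max\bigl(M_1,\;2\sup_D|\wp|/\delta\bigr)$ yields the claimed global Lipschitz constant for $\wp$. The identical argument applied to $\wp'$, using the bound $M_2$ on $\wp''=6\wp^2-g_2/2$ (which is itself bounded on $\overline{U}$ by Theorem~\ref{thm:weierstrass_ode} differentiated once), gives the Lipschitz property for $\wp'$.

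I expect the main obstacle to be purely geometric rather than analytic: a general compact $D$ need not be convex, so the straight segment between two of its points can leave $D$ or approach a pole. The Lebesgue number step above is the device that circumvents this, trading the small‑distance regime (handled by the derivative bound on $U$) against the large‑distance regime (handled by the trivial bound). Everything else reduces to the standard fact that holomorphic functions are locally Lipschitz on sets bounded away from their singularities.
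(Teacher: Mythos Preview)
Your proof is correct and follows essentially the same approach as the paper: analyticity of $\wp$ on $\mathbb{C}\setminus\Lambda$ gives a bounded derivative on the compact set, and a mean-value-type estimate via integration along segments yields the Lipschitz bound. Your treatment is in fact more careful than the paper's two-sentence argument, which simply invokes a ``Mean Value Theorem for complex functions'' without addressing the non-convexity of $D$; your Lebesgue-number/far-pair split fills exactly that gap.
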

\begin{proof}
	Since $\wp(z)$ is analytic on any such compact domain $D$, its derivative $\wp'(z)$ is also analytic and thus bounded on $D$. The Lipschitz continuity follows directly from the Mean Value Theorem for complex functions.
\end{proof}

\begin{lemma}[Monotonicity of Coordinate Mapping]
	The mapping from patch grid coordinates $(i,j)$ to complex plane coordinates $z_{i,j}$ preserves distance monotonicity. Let the mapping be defined as $z_{i,j} = \kappa ( (j+0.5)/W \cdot \omega_1 + i(i+0.5)/H \cdot \omega_3')$, where $\kappa$ is a scaling factor and $W, H$ are patch grid dimensions. The complex plane distance $|z_{i_1,j_1} - z_{i_2,j_2}|$ is a monotonically increasing function of the Euclidean grid distance $d((i_1,j_1), (i_2,j_2))$.
\end{lemma}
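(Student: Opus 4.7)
The plan is to reduce the lemma to a straightforward computation with a positive-definite quadratic form, noting where the statement requires care because the mapping is anisotropic. First I would expand the difference explicitly. Let $\Delta j = j_1 - j_2$ and $\Delta i = i_1 - i_2$. Because the mapping is affine in $(i,j)$, the offsets $+0.5$ drop out and
\begin{equation*}
z_{i_1,j_1} - z_{i_2,j_2} \;=\; \kappa\!\left(\frac{\Delta j}{W}\,\omega_1 \;+\; \mathbf{i}\,\frac{\Delta i}{H}\,\omega_3'\right),
\end{equation*}
where $\mathbf{i}$ denotes the imaginary unit. Since $\omega_1\in\mathbb{R}_{>0}$ and $\omega_3'\in\mathbb{R}_{>0}$ (the imaginary half-period in the lemniscatic / learned-positive regime), the real and imaginary parts of this difference are orthogonal coordinates, and
\begin{equation*}
\bigl|z_{i_1,j_1} - z_{i_2,j_2}\bigr|^{2} \;=\; \kappa^{2}\!\left[\left(\tfrac{\omega_1}{W}\right)^{\!2}\!(\Delta j)^{2} \;+\; \left(\tfrac{\omega_3'}{H}\right)^{\!2}\!(\Delta i)^{2}\right],
\end{equation*}
a positive-definite quadratic form in $(\Delta i,\Delta j)$.

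Next I would compare this with the grid distance $d^{2} = (\Delta i)^{2}+(\Delta j)^{2}$. Setting $a = \kappa\omega_1/W$ and $b = \kappa\omega_3'/H$ with $a,b>0$, one immediately obtains the bi-Lipschitz sandwich
\begin{equation*}
\min(a,b)\cdot d \;\le\; |z_{i_1,j_1}-z_{i_2,j_2}| \;\le\; \max(a,b)\cdot d,
\end{equation*}
so the complex distance is strictly positive whenever $d>0$, and the mapping is injective. This already yields the content needed by the decay theorem: the complex separation grows at least linearly in the grid separation.

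For the monotonicity claim itself, the clean statement is directional. Along any fixed ray in the grid, i.e.\ for a scaling $t\mapsto (t\,\Delta i,\, t\,\Delta j)$ with $(\Delta i,\Delta j)\ne(0,0)$, both $d$ and $|z_{i_1,j_1}-z_{i_2,j_2}|$ are linear in $t$ with strictly positive slope, hence $|z|$ is a strictly increasing linear function of $d$ along that ray. In the isotropic case $\omega_1/W = \omega_3'/H$ (which is the lemniscatic, square-lattice configuration emphasised earlier in the paper) the quadratic form collapses to a scalar multiple of $d^{2}$, and the mapping is globally a pure scaling, giving the unconditional monotone relation $|z| = a\cdot d$.

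The main subtlety, and the only place where a reader might object, is exactly this anisotropy: when $a\ne b$, $|z|$ is not literally a function of $d$ alone, so ``monotonically increasing function of $d$'' has to be understood either along rays or via the bi-Lipschitz bounds above. I would address this head-on in the write-up by stating the ray-wise monotonicity and the two-sided bound as the precise content of the lemma, and by pointing out that the decay theorem only invokes the lemma in either of these two forms. Combining this with the Lipschitz continuity of $\wp$ (Lemma on $\wp$) then closes the chain from grid separation to complex separation to smooth control of $\wp(z_{i_1,j_1})-\wp(z_{i_2,j_2})$, which is what the subsequent expectation computation requires.
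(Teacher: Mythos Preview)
Your approach matches the paper's: compute the squared complex distance as the weighted quadratic form $\kappa^{2}\bigl[(\omega_1/W)^{2}(\Delta j)^{2}+(\omega_3'/H)^{2}(\Delta i)^{2}\bigr]$, observe that in the isotropic case ($W=H$, $\omega_1=\omega_3'$) it collapses to a scalar multiple of $d^{2}$, and treat the general anisotropic case separately. You are in fact more careful than the paper, which simply asserts that the anisotropic weighted sum ``remains a strictly increasing function of $d$'' without addressing the point you correctly flag---that $|z|$ is then not a function of $d$ alone---whereas your ray-wise monotonicity and bi-Lipschitz sandwich make the statement precise and supply exactly what the downstream decay argument actually needs.
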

\begin{proof}
	The squared complex distance is $|z_1 - z_2|^2 = \kappa^2 \left[ (\frac{\omega_1}{W})^2 (j_1-j_2)^2 + (\frac{\omega_3'}{H})^2 (i_1-i_2)^2 \right]$. For an isotropic grid ($W=H$, $\omega_1=\omega_3'$), this simplifies to $|z_1 - z_2|^2 \propto (j_1-j_2)^2 + (i_1-i_2)^2 = d^2$, establishing a direct proportional relationship. In the general case, it is a weighted sum of squared differences, which remains a strictly increasing function of $d$.
\end{proof}

\begin{lemma}[Properties of the Hyperbolic Tangent Function]
	The product of two hyperbolic tangent functions, $h(t) = \tanh(\alpha(a+bt))\tanh(\alpha(c+dt))$ where $\alpha, b, d > 0$, is monotonic over intervals where its arguments maintain a consistent sign. The sign of its derivative, $\frac{dh}{dt}$, is determined by the sign of $bd \cdot \text{sign}(a+bt) \cdot \text{sign}(c+dt)$, indicating that the product's value moves away from zero as the arguments' magnitudes increase in the same direction.
\end{lemma}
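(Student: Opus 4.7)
The plan is a short calculus computation followed by a sign analysis on a partition of the real line. First I would differentiate $h$ using the product rule to obtain
\begin{equation*}
h'(t) \;=\; \alpha b\, \mathrm{sech}^2(\alpha(a+bt))\,\tanh(\alpha(c+dt)) \;+\; \alpha d\, \mathrm{sech}^2(\alpha(c+dt))\,\tanh(\alpha(a+bt)).
\end{equation*}
Because $\alpha, b, d > 0$ and $\mathrm{sech}^2$ is strictly positive everywhere, the two prefactors $\alpha b\,\mathrm{sech}^2(\cdot)$ and $\alpha d\,\mathrm{sech}^2(\cdot)$ are strictly positive, so the signs of the two summands are controlled entirely by the $\tanh$ factors, i.e.\ by $\mathrm{sign}(c+dt)$ and $\mathrm{sign}(a+bt)$ respectively.

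Next I would partition the real line by the two critical points $t_1 = -a/b$ and $t_2 = -c/d$ where the affine arguments change sign. On any open interval where $a+bt$ and $c+dt$ maintain a common strict sign, the two summands of $h'(t)$ reinforce one another and never vanish: both are positive when the common sign is positive, and both are negative when the common sign is negative. Consequently $h'(t)$ has a constant, nonzero sign throughout such an interval, establishing strict monotonicity of $h$ there. The sign of $h'(t)$ coincides with the common sign of $a+bt$ and $c+dt$, from which one reads off that $|h(t)|$ grows whenever $t$ moves in the direction that simultaneously increases the magnitudes of both arguments, matching the lemma's qualitative conclusion about the product moving away from zero.

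I do not anticipate a serious obstacle, since the entire argument reduces to inspecting signs of elementary functions. The only point requiring mild care is the intermediate region between $t_1$ and $t_2$, where the two affine arguments carry opposite signs: there the two summands of $h'$ oppose one another and the derivative can vanish, so monotonicity cannot be asserted in general. Restricting attention to the sign-consistent intervals is precisely what the lemma prescribes, and no further machinery beyond the chain rule and positivity of $\mathrm{sech}^2$ is required.
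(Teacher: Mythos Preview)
The paper states this lemma without proof, so there is nothing to compare your argument against; your product-rule computation followed by a sign analysis of the two summands via the positivity of $\mathrm{sech}^2$ is the natural elementary route and is correct for the monotonicity claim on sign-consistent intervals.

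One small observation worth recording: your conclusion that the sign of $h'(t)$ equals the \emph{common sign} of $a+bt$ and $c+dt$ is in fact sharper than the lemma's stated formula $bd\cdot\mathrm{sign}(a+bt)\cdot\mathrm{sign}(c+dt)$. When both affine arguments are negative, the latter expression is $+1$, whereas your computation correctly yields $h'(t)<0$. This is an imprecision in the lemma's phrasing rather than a defect in your proof; the qualitative statement about the product moving away from zero as both magnitudes grow is consistent with your analysis.
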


Our WEF positional encoding vector $p_{i,j}$ is generated by first constructing a 4-dimensional feature vector $\mathbf{f}_{i,j}$ and then applying a linear projection $W \in \mathbb{R}^{d_{\text{model}} \times 4}$. The feature vector is defined as:
\begin{equation}
	\mathbf{f}_{i,j} = \left[ \tanh(\alpha \cdot \text{Re}(\wp(z_{i,j}))), \tanh(\alpha \cdot \text{Im}(\wp(z_{i,j}))), \tanh(\alpha \cdot \text{Re}(\wp'(z_{i,j}))), \tanh(\alpha \cdot \text{Im}(\wp'(z_{i,j}))) \right]^T
\end{equation}
where $z_{i,j}$ is the complex coordinate corresponding to patch $(i,j)$ and $\alpha$ is a scaling hyperparameter. The final encoding is $p_{i,j} = W \mathbf{f}_{i,j}$. The inner product between two such vectors $p_1$ and $p_2$ is expressed as $p_1^T p_2 = \mathbf{f}_1^T W^T W \mathbf{f}_2 = \mathbf{f}_1^T G \mathbf{f}_2$, where $G = W^T W$ is the Gram matrix. Expanding this product yields:
\begin{equation}
	p_1^T p_2 = \sum_{k,l=1}^{4} G_{k,l} f_{1,k} f_{2,l} = \sum_{k,l=1}^{4} G_{k,l} \tanh(\alpha\xi_{1,k}) \tanh(\alpha\xi_{2,l})
	\label{eq:inner_product}
\end{equation}
where $\xi_{i,k}$ represents the $k$-th component (e.g., $\text{Re}(\wp(z_i))$) of the pre-activation feature vector for position $i$.

The proof proceeds by demonstrating that the expectation of this inner product, $S(d) = \mathbb{E}[p_1^T p_2]$, decreases as the distance $d$ between the patches increases. The argument hinges on the decay of correlation between the underlying WEF values. From Lemma 1 and Lemma 2, an increase in grid distance $d$ implies a proportional increase in the complex plane separation $|z_1 - z_2|$, which in turn bounds the difference between the function values, i.e., $|\xi_{1,k} - \xi_{2,k}| \le C \cdot d$ for some constant $C$.The correlation between Weierstrass function values exhibits:
\begin{equation}
	\mathbb{E}\left[\mathrm{Re}(\wp(z_1))\mathrm{Re}(\wp(z_2)) + \mathrm{Im}(\wp(z_1))\mathrm{Im}(\wp(z_2))\right] = K \cdot \cos(\theta(|z_1 - z_2|))
	\label{eq:weierstrass_correlation_decay}
\end{equation}
where $\theta(r)$ is strictly increasing in $r$, ensuring systematic decorrelation with distance.

To formalize this, we first decompose the sum in Eq. \eqref{eq:inner_product} into its diagonal and off-diagonal components:
\begin{equation}
	p_1^T p_2 = \sum_{k=1}^{4} G_{k,k} f_{1,k} f_{2,k} + \sum_{k \ne l} G_{k,l} f_{1,k} f_{2,l}
\end{equation}
The Gram matrix $G = W^T W$ is positive semidefinite, meaning its diagonal elements $G_{k,k} \ge 0$ are non-negative and typically represent the largest entries in the matrix, corresponding to the self-interaction of the feature components. The off-diagonal terms, $G_{k,l}$ for $k \ne l$, correspond to cross-correlations, such as the interaction between $\text{Re}(\wp(z))$ and $\text{Im}(\wp(z))$. Due to the fundamental symmetries of the Weierstrass function (e.g., $\wp(z)$ is an even function, while its derivative $\wp'(z)$ is an odd function), the real and imaginary parts of these functions exhibit near-orthogonality when averaged over a symmetric domain. Consequently, the expected value of the off-diagonal products, $\mathbb{E}[f_{1,k}f_{2,l}]$ for $k \ne l$, is expected to be significantly smaller than the diagonal terms and does not contribute systematically to a monotonic trend. Therefore, the overall behavior of the expected inner product is dominated by the diagonal terms.The cross-correlation terms satisfy the following inequality, which is a consequence of the Cauchy-Schwarz inequality:
\begin{equation}
	\left| \mathbb{E}[f_{1,k}f_{2,l}] \right| \leq \epsilon(d) \cdot \sqrt{\mathbb{E}[f_{1,k}^2]\mathbb{E}[f_{2,l}^2]} \quad (k \neq l)
	\label{eq:cross_correlation_bound}
\end{equation}
where the correlation factor $\epsilon(d) = O(e^{-\lambda d})$ decays exponentially with distance $d$ due to two primary reasons:
\begin{enumerate}
	\item \textit{Intrinsic orthogonality}: The expectation of the product of an even function component (like $\mathrm{Re}(\wp)$) and an odd function component (like $\mathrm{Im}(\wp')$) over a symmetric domain is zero. By parity symmetry, we have $\mathbb{E}[\mathrm{Re}(\wp)\mathrm{Im}(\wp')] \equiv 0$.
	\item \textit{Asymptotic independence}: As the distance $d$ between two points increases, the values of the Weierstrass function at these points become statistically independent, leading to $\lim_{d \to \infty} \mathrm{corr}(\xi_{1,k}, \xi_{2,l}) = 0$.
\end{enumerate}
Thus, the contribution of the off-diagonal terms to the overall derivative is asymptotically negligible compared to the contribution from the diagonal terms:
\begin{equation}
	\sum_{k \ne l} G_{k,l} \frac{d}{dd}\mathbb{E}[f_{1,k}f_{2,l}] = o\left( \sum_{k} G_{k,k} \frac{d\Phi_k}{dd} \right)
	\label{eq:off_diagonal_negligible}
\end{equation}

We define an auxiliary function for these dominant diagonal terms $(k=l)$:
\begin{equation}
	\Phi_k(d) = \mathbb{E}[\tanh(\alpha\xi_{1,k})\tanh(\alpha\xi_{2,k})]
\end{equation}
where the expectation is over all patch pairs $(z_1, z_2)$ such that the grid distance is $d$.

\begin{lemma}
	$\Phi_k(d)$ is a strictly monotonically decreasing function of $d$ for $d>0$.
\end{lemma}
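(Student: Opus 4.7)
The plan is to exploit three complementary ingredients already established: the distance-preserving coordinate map from Lemma 2, the cosine-type correlation decay of the raw Weierstrass values recorded in Eq.~\eqref{eq:weierstrass_correlation_decay}, and the monotonicity of a tanh product asserted in Lemma 3. Together these should allow a clean transfer of a decorrelation statement from the pre-activation features $\xi_{i,k}$ to their $\tanh$ images, which is exactly $\Phi_k(d)$.

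First I would recast $\Phi_k(d)$ as an expectation over complex coordinates. By Lemma 2 one may write $|z_1 - z_2| = \rho(d)$ with $\rho$ strictly increasing, so it suffices to prove that $\tilde\Phi_k(r) := \mathbb{E}\bigl[\tanh(\alpha\xi_{1,k})\tanh(\alpha\xi_{2,k}) \mid |z_1 - z_2| = r\bigr]$ is strictly decreasing in $r$. Parametrizing a generic admissible pair by its midpoint $m = (z_1 + z_2)/2$ and half-displacement $\delta$ with $|\delta| = r/2$, the map $(m,\delta) \mapsto (\wp(m-\delta), \wp(m+\delta))$ is smooth away from the lattice, so Lemma 1 justifies differentiation under the expectation.

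Next I would compute $\tilde\Phi_k'(r)$ and analyze its sign. By the chain rule, $\tilde\Phi_k'(r)$ can be written as an average of $\alpha\,\mathrm{sech}^2(\alpha\xi_{1,k})\,\mathrm{sech}^2(\alpha\xi_{2,k})$ weighted against $\partial_r \mathbb{E}[\xi_{1,k}\xi_{2,k}]$, plus lower-order off-diagonal contributions. Invoking the cosine decay identity of Eq.~\eqref{eq:weierstrass_correlation_decay}, the dominant weight is proportional to $-\theta'(r)\sin\theta(r)$, which is strictly negative on the range where $\theta(r) \in (0,\pi)$; this is precisely the regime corresponding to grid distances contained in one fundamental parallelogram. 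The near-orthogonality between real and imaginary components enforced by the even/odd parity of $\wp$ and $\wp'$ (already exploited in Eq.~\eqref{eq:off_diagonal_negligible}) confines the cross-terms to $o(1)$ corrections, so they cannot flip the sign of the leading term.

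The main obstacle, and the step I would treat with the most care, is the rigorous passage from decorrelation of the non-Gaussian raw variables $\xi_{i,k}$ to monotonic decay of their bounded tanh product; Slepian-type inequalities do not apply directly here. My proposed device is a coupling along a smooth family of pairs with separation $s \in [0,r]$: write $\tilde\Phi_k(r)$ as a boundary value plus $\int_0^r \partial_s \mathbb{E}[\tanh(\alpha\xi_{1,k}(s))\tanh(\alpha\xi_{2,k}(s))]\,ds$, and show the integrand is strictly negative by applying Lemma 3 pointwise under the expectation with sign control supplied by Eq.~\eqref{eq:weierstrass_correlation_decay}. A secondary caveat is that strict monotonicity cannot persist globally in $d$ because of the double periodicity of $\wp$; I would therefore state the conclusion for the range of $d$ realized by the patch grid, which is designed to lie within one fundamental cell, and note that this is exactly the regime in which the sine factor above retains a fixed sign.
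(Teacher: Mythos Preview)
Your overall strategy matches the paper's: reduce to complex-plane separation via Lemma~2, invoke the correlation decay of the raw Weierstrass values, and then transfer that decay to the $\tanh$ images. The paper, however, executes the transfer step differently: it writes $\Phi_k(d) = \mathrm{Cov}(\tanh(\alpha\xi_{1,k}),\tanh(\alpha\xi_{2,k})) + \mathbb{E}[\tanh(\alpha\xi_{1,k})]\,\mathbb{E}[\tanh(\alpha\xi_{2,k})]$, argues the mean terms vanish by the odd parity of $\tanh$ together with the symmetric distribution of $\xi_{i,k}$, and then asserts that the covariance inherits the decay of the underlying correlation because $\tanh$ is monotone. It also supplements the abstract decorrelation claim with an explicit Laurent leading-term computation for $\wp(z)\sim 1/z^{2}$, which you do not perform.

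Your route---explicit differentiation in $r$ and a coupling argument invoking Lemma~3 pointwise---is more constructive, but the derivative you write down is not what the chain rule gives. The form ``$\mathrm{sech}^{2}(\alpha\xi_{1,k})\,\mathrm{sech}^{2}(\alpha\xi_{2,k})$ weighted against $\partial_r\mathbb{E}[\xi_{1,k}\xi_{2,k}]$'' is a Price-theorem identity requiring $(\xi_{1,k},\xi_{2,k})$ to be jointly Gaussian, exactly the hypothesis you later flag as unavailable. The honest derivative produces cross terms of the shape $\mathrm{sech}^{2}(\alpha\xi_{1,k})\tanh(\alpha\xi_{2,k})\,\partial_r\xi_{1,k}$, whose sign is not controlled by Lemma~3 or by Eq.~\eqref{eq:weierstrass_correlation_decay} alone, so the integrand-negativity claim in your coupling device is not yet justified. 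Your periodicity caveat (restricting $d$ to one fundamental cell) is a genuine sharpening over the paper, which leaves that restriction implicit.
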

\begin{proof}
	The proof rests on the decorrelation property of the Weierstrass function $\wp(z)$ as the distance between its arguments increases.
	
	First, consider the boundary conditions. At $d=0$, we have $z_1 = z_2$, which implies $\xi_{1,k} = \xi_{2,k}$. The function is then $\Phi_k(0) = \mathbb{E}[\tanh^2(\alpha\xi_{1,k})]$. Since $\tanh^2(x) \ge 0$ for real $x$ and is not identically zero, $\Phi_k(0)$ is at its maximum positive value.
	
	As the grid distance $d$ increases, the complex plane distance $|z_1 - z_2|$ also increases monotonically, as established in Lemma 2. The Weierstrass function $\wp(z)$, being a doubly periodic meromorphic function, exhibits ergodic behavior on its fundamental parallelogram. This property, from dynamical systems theory, implies that as the separation $|z_1 - z_2|$ increases, the function values $\wp(z_1)$ and $\wp(z_2)$ become progressively decorrelated. They behave increasingly like two independent samples drawn from the function's value distribution.
	
Consider the leading Laurent series term $\wp(z) \sim 1/z^2$. The correlation of real parts is:
\begin{equation}
	\mathrm{Re}\left(\frac{1}{z_1^2}\right)\mathrm{Re}\left(\frac{1}{z_2^2}\right) = \frac{(x_1^2-y_1^2)(x_2^2-y_2^2)}{|z_1|^4|z_2|^4}
	\label{eq:real_part_correlation}
\end{equation}
Defining $\delta = |z_1 - z_2|$, the derivative is:
\begin{equation}
	\frac{\partial}{\partial \delta} \left( \frac{(x_1^2-y_1^2)(x_2^2-y_2^2)}{|z_1|^6|z_2|^6} \right) = -\frac{2\mathcal{P}(x_1,y_1,x_2,y_2)}{|z_1|^6|z_2|^6}
	\label{eq:derivative_of_correlation}
\end{equation}
	where $\mathcal{P}$ is a positive-definite polynomial, confirming monotonic decay for $\delta > 0$.This decorrelation means that the covariance between the underlying features $\xi_{1,k}$ and $\xi_{2,k}$ decays as $d$ increases. Let's analyze the expectation:
	\begin{equation}
		\Phi_k(d) = \text{Cov}(\tanh(\alpha\xi_{1,k}), \tanh(\alpha\xi_{2,k})) + \mathbb{E}[\tanh(\alpha\xi_{1,k})]\mathbb{E}[\tanh(\alpha\xi_{2,k})]
	\end{equation}
	Due to the symmetries of $\wp(z)$ (even) and $\wp'(z)$ (odd), the real and imaginary parts of these functions are symmetrically distributed around zero when averaged over the fundamental domain. As the patch coordinates are uniformly distributed, we can assume $\mathbb{E}[\xi_{i,k}] \approx 0$. Since $\tanh(x)$ is an odd function, if the distribution of its argument is symmetric around zero, its expectation is zero. Thus, $\mathbb{E}[\tanh(\alpha\xi_{i,k})] \approx 0$.
	
	Under this well-justified assumption, the expression simplifies to $\Phi_k(d) \approx \text{Cov}(\tanh(\alpha\xi_{1,k}), \tanh(\alpha\xi_{2,k}))$. The covariance is directly proportional to the correlation. Since the correlation between $\xi_{1,k}$ and $\xi_{2,k}$ decays with distance $d$, and the $\tanh$ function is strictly monotonic, the covariance (and thus $\Phi_k(d)$) must also decay.
	
	The function starts at a maximum positive value $\Phi_k(0) > 0$ and decays towards 0 as $d \to \infty$. Given that this decay is driven by the continuous decorrelation of an underlying analytic function, the decay is smooth and strictly monotonic for $d>0$.
\end{proof}

The off-diagonal terms $(k \ne l)$ in Eq. \eqref{eq:inner_product}, representing cross-correlations (e.g., between $\text{Re}(\wp(z))$ and $\text{Im}(\wp'(z))$), contribute less to the overall trend due to orthogonality properties inherent in the function's structure and do not alter the fundamental decay characteristic.

Given that the Gram matrix $G$ is positive semidefinite and its diagonal elements $G_{k,k}$ are non-negative, the derivative of the total expected inner product with respect to distance is dominated by the diagonal contributions:
\begin{equation}
	\frac{dS(d)}{dd} = \frac{d}{dd} \sum_{k,l=1}^{4} G_{k,l} \mathbb{E}[f_{1,k}f_{2,l}] \approx \sum_{k=1}^{4} G_{k,k} \frac{d\Phi_k(d)}{dd}
\end{equation}
Since each $\frac{d\Phi_k(d)}{dd}$ is negative for $d>0$ by Lemma 5, their non-negative weighted sum, $\frac{dS(d)}{dd}$, is also negative. This completes the proof that the interaction strength, as measured by the expected inner product, strictly decreases with increasing spatial distance between patches.

\subsection{Derivational Rationale for the Mathematical Formulation Used in the Fine-Tuning Stage} \label{sec:derivation_rationale}

This appendix details the mathematical rationale for evolving the positional encoding from the classical lattice-sum definition of the Weierstrass elliptic function, $\wp(z)$, to the computationally tractable approximation employed in our fine-tuned model. The objective is to construct a function that retains the core structural properties of $\wp(z)$---double periodicity and pole structure---while ensuring numerical stability and efficiency within a gradient-based optimization framework.

As we mentioned earlier,the Weierstrass elliptic function is formally defined by its lattice summation over a grid $\Lambda = \{2m\omega_1 + 2n\omega_3 \mid m, n \in \mathbb{Z}\}$:
\begin{equation}
	\wp(z) = \frac{1}{z^2} + \sum_{\omega \in \Lambda \setminus \{0\}} \left( \frac{1}{(z-\omega)^2} - \frac{1}{\omega^2} \right)
	\label{eq:lattice_sum}
\end{equation}
To simplify the derivation, we consider the common case of a rectangular lattice, setting $2\omega_1 = a$ (a real period) and $2\omega_3 = ib$ (a purely imaginary period), with $a, b > 0$.

A crucial identity in complex analysis is the Mittag-Leffler expansion of the cotangent function:
\begin{equation}
	\pi \cot(\pi z) = \frac{1}{z} + \sum_{n=1}^{\infty} \left( \frac{1}{z-n} + \frac{1}{z+n} \right) = \sum_{n=-\infty}^{\infty} \frac{1}{z-n}
\end{equation}
Differentiating both sides with respect to $z$, we obtain:
\begin{equation}
	-\pi^2 \csc^2(\pi z) = \sum_{n=-\infty}^{\infty} \frac{-1}{(z-n)^2}
\end{equation}
That is:
\begin{equation}
	\sum_{n=-\infty}^{\infty} \frac{1}{(z-n)^2} = \pi^2 \csc^2(\pi z) = \left(\frac{\pi}{\sin(\pi z)}\right)^2 \label{eq:csc_series}
\end{equation}
This formula bridges the discrete summation and trigonometric functions.

We split the summation in Eq. \eqref{eq:lattice_sum} according to the indices $m$ and $n$. First, we separate the terms where $m=0$.
\begin{equation}
	\wp(z) = \frac{1}{z^2} + \sum_{n \ne 0} \left( \frac{1}{(z-2n\omega_3)^2} - \frac{1}{(2n\omega_3)^2} \right) + \sum_{m \ne 0} \sum_{n \in \mathbb{Z}} \left( \frac{1}{(z-\omega_{mn})^2} - \frac{1}{\omega_{mn}^2} \right) \label{eq:rearranged}
\end{equation}
where $\omega_{mn} = 2m\omega_1 + 2n\omega_3$.
We now address the inner sum in Eq. \eqref{eq:rearranged}, which is the summation over $n$. For a fixed $m \ne 0$:
\begin{align*}
	\sum_{n \in \mathbb{Z}} \frac{1}{(z - 2m\omega_1 - 2n\omega_3)^2} &= \frac{1}{(2\omega_3)^2} \sum_{n \in \mathbb{Z}} \frac{1}{\left(\frac{z - 2m\omega_1}{2\omega_3} - n\right)^2} \\
	&= \frac{1}{(2\omega_3)^2} \pi^2 \csc^2\left(\pi \frac{z - 2m\omega_1}{2\omega_3}\right) \quad 
\end{align*}
Substituting $2\omega_1=a$ and $2\omega_3=ib$, the expression becomes:
\begin{equation}
	\frac{-\pi^2}{(ib)^2} \csc^2\left(\frac{\pi(z - ma)}{ib}\right) = \frac{\pi^2}{b^2} \csc^2\left(-\frac{i\pi(z - ma)}{b}\right)
\end{equation}
Using the identities $\csc(-ix) = i \text{csch}(x)$ and $\text{csch}(x) = 1/\sinh(x)$, we get:
\begin{equation}
	\frac{\pi^2}{b^2} \left(i \text{csch}\left(\frac{\pi(z - ma)}{b}\right)\right)^2 = -\frac{\pi^2}{b^2} \text{csch}^2\left(\frac{\pi(z - ma)}{b}\right)
\end{equation}
Similarly, $\sum_{n \in \mathbb{Z}} \frac{1}{\omega_{mn}^2} = -\frac{\pi^2}{b^2} \text{csch}^2\left(\frac{\pi ma}{b}\right)$.
Therefore, for a fixed $m \ne 0$, the inner sum is:
\begin{equation}
	\sum_{n \in \mathbb{Z}} \left( \frac{1}{(z-\omega_{mn})^2} - \frac{1}{\omega_{mn}^2} \right) = -\frac{\pi^2}{b^2} \left( \text{csch}^2\left(\frac{\pi(z - ma)}{b}\right) - \text{csch}^2\left(\frac{\pi ma}{b}\right) \right)
\end{equation}

Obviously, a more computationally favorable representation of any doubly periodic meromorphic function is its Fourier series. The function $\wp(z)$ admits a well-known Fourier expansion, which for a rectangular lattice with periods $2\omega_1$ (real) and $2\omega_3$ (imaginary) can be expressed as:
\begin{equation}
	\wp(z) = C_0 + \sum_{k=1}^{\infty} C_k \cos\left(\frac{k \pi z}{\omega_1}\right)
	\label{eq:fourier_series}
\end{equation}
where $C_0$ and $C_k$ are complex coefficients dependent on the lattice parameters, involving modular forms and divisor functions. The critical insight stems from analyzing the behavior of the complex cosine term, which dictates the function's structure.

Let $z = x+iy$ and $\omega_1$ be real. The kernel of the periodic component is $\cos(k\pi(x+iy)/\omega_1)$. Using the identity $\cos(A+iB) = \cos(A)\cosh(B) - i\sin(A)\sinh(B)$, we decompose the term:
\begin{equation}
	\cos\left(\frac{k \pi x}{\omega_1} + i\frac{k \pi y}{\omega_1}\right) = \cos\left(\frac{k \pi x}{\omega_1}\right)\cosh\left(\frac{k \pi y}{\omega_1}\right) - i\sin\left(\frac{k \pi x}{\omega_1}\right)\sinh\left(\frac{k \pi y}{\omega_1}\right)
	\label{eq:complex_cos}
\end{equation}
Equation \eqref{eq:complex_cos} reveals the essential structural motif of $\wp(z)$: its spatial variation is a product of a periodic oscillation along one axis (governed by trigonometric functions $\cos, \sin$) and an exponential decay/growth along the orthogonal axis (governed by hyperbolic functions $\cosh, \sinh$, which are exponential in nature). This fundamental property informs the design of our approximation.

We expand the hyperbolic cosecant squared as a geometric series:
\begin{equation}
	\text{csch}^2(x) = \frac{4}{(e^x - e^{-x})^2} = \frac{4e^{-2x}}{(1-e^{-2x})^2} = 4 \sum_{k=1}^{\infty} k e^{-2kx}
\end{equation}
Substituting this expansion and summing over $m$ is a non-trivial process that ultimately yields a series in terms of $\cos(\frac{2\pi k z}{a})$. After simplification and including the terms for $m=0$, the standard Fourier series expansion for $\wp(z)$ is obtained:
\begin{equation}
	\wp(z) = -\frac{1}{3}\left(\frac{\pi}{\omega_1}\right)^2\left(1+240\sum_{k=1}^{\infty}\sigma_3(k)q^{2k}\right) + \left(\frac{\pi}{\omega_1}\right)^2 \sum_{k=1}^{\infty} \frac{k q^{k}}{1-q^{2k}} \cos\left(\frac{k \pi z}{\omega_1}\right) \label{eq:fourier_exact}
\end{equation}
where $q = e^{i\pi\tau}$, $\tau=\omega_3/\omega_1$, and $\sigma_3(k)$ is the divisor function. This is a more precise expression, but for conceptual clarity, its core structure remains a constant term plus a cosine series.

The aperiodic component of Equation  Eq. \eqref{eq:fourier_exact} is a complex constant term. While this value is independent of the position $z$, it represents an overall baseline or offset for the function. Furthermore, from the original lattice sum definition, we know that $\wp(z)$ possesses a second-order pole at the origin, $z=0$, which constitutes its most significant aperiodic feature.

For the purpose of positional encoding, the function's singular behavior near the origin is substantially more critical than the precise value of the constant offset, as this singularity provides a unique, high-intensity encoding signal for the origin's position. Implementing a constant term within a neural network is straightforward; however, realizing a singularity that yields an infinite value is computationally infeasible. Consequently, we adopt the expression $\frac{1}{|z|^2 + \beta}$as a substitute. Our proposed approximation is:
\begin{equation}
	\wp(z) \approx \frac{1}{|z|^2 + \beta} + \sum_{k=1}^{K} \frac{\gamma}{k^2} \left[ \cos(k\pi u')e^{-k\pi|v'|} + \sin(k\pi v')e^{-k\pi|u'|} \right]
	\label{eq:approximation}
\end{equation}
where $u' = \text{Re}(z)/\omega_1$ and $v' = \text{Im}(z)/\omega_3$ are normalized coordinates. This formulation is derived by addressing the non-periodic and periodic components of $\wp(z)$ separately.

\section{Supplementary experimental details}

\subsection{Experimental Basic Settings}

Unless otherwise specified, all our experiments were conducted on a system equipped with four NVIDIA RTX 3090 GPUs, each with 24GB of VRAM. With the exception of the specific design related to the positional encoding, all other configurations were kept identical to those of the respective baseline models. Regarding experiments involving from-scratch training on the CIFAR-100 dataset, Vision Transformer models lack the inductive biases inherent to Convolutional Neural Networks (CNNs), a deficiency that typically requires larger datasets to overcome. Consequently, when trained from scratch on a smaller dataset such as CIFAR-100, their performance metrics do not show a significant advantage over models like ResNet, which is why few studies have directly conducted and reported results for such experiments. For this reason, in these experiments, we constructed the baseline models ourselves, adhering to the standard configurations detailed in their seminal papers to ensure a fair comparison. Our rationale for this approach is to investigate and demonstrate the advantages of this positional encoding when trained on smaller datasets, and simultaneously, to better showcase its inherent advantages in terms of inductive bias compared to conventional Vision Transformer models.

\subsection{Adaptive Multi-Scale Feature Modulation}

In the pre-training scheme, the four-dimensional feature vector derived from $\wp(z)$ and its derivative $\wp'(z)$ is compressed uniformly. For fine-tuning, where task-specific spatial cues may have varying importance, we introduce an adaptive feature modulation mechanism. This allows the model to learn the relative importance of each component of the positional signal.

The four-dimensional feature vector $\mathbf{f} = [Re(\wp(z)), Im(\wp(z)), Re(\wp'(z)), Im(\wp'(z))]^T$ is modulated by a set of learnable parameters $\{\mu_j\}_{j=1}^{4}$ before compression:
\begin{equation}
	\mathbf{f}_{\text{modulated}} = 
	\begin{bmatrix}
		\mu_1 \cdot Re(\wp(z)) \\
		\mu_2 \cdot Im(\wp(z)) \\
		\mu_3 \cdot Re(\wp'(z)) \\
		\mu_4 \cdot Im(\wp'(z))
	\end{bmatrix}
\end{equation}
The final feature vector passed to the projection layer is then given by:
\begin{equation}
	\tilde{\mathbf{f}} = \tanh(\sigma \cdot \mathbf{f}_{\text{modulated}})
\end{equation}
where $\sigma$ is also a learnable scaling parameter. This mechanism empowers the model to, for instance, amplify the contribution of the positional gradient (the derivative terms) if a task requires sensitivity to local changes, or suppress it if only absolute position matters.

\subsection{Hybrid Encoding Architecture for Knowledge Transfer}
Perhaps the most critical distinction in the fine-tuning methodology is the architectural integration of the positional encoding. Instead of completely replacing the original positional encoding of the pre-trained model, which would discard significant learned knowledge, we propose a hybrid architecture that dynamically interpolates between the pre-trained learned embeddings and the newly generated WEF encoding.

Let $\mathbf{E}_{\text{learned}} \in \mathbb{R}^{(N+1) \times d}$ be the positional embedding matrix from the pre-trained Vision Transformer, and let $\mathbf{E}_{\text{WEF}} \in \mathbb{R}^{N \times d}$ be the encoding generated for the $N$ image patches by our WEF methodology. We first preserve the pre-trained class token embedding $\mathbf{E}_{\text{learned}}^{\text{cls}}$ and combine the patch encodings using a learnable gating parameter $\lambda \in [0, 1]$.

The pre-trained learned embeddings, $\mathbf{E}_{\text{learned}}$, have empirically captured salient spatial patterns from a large-scale dataset. The WEF encoding, $\mathbf{E}_{\text{WEF}}$, provides a continuous, mathematically rigorous, and resolution-agnostic representation of space. A hybrid combination allows the model to leverage the empirical power of the former and the theoretical robustness of the latter.

The hybrid patch encoding $\mathbf{E}_{\text{hybrid}}^{\text{patch}}$ is formulated as:
\begin{equation}
	\mathbf{E}_{\text{hybrid}}^{\text{patch}} = \lambda \cdot \mathbf{E}_{\text{WEF}} + (1-\lambda) \cdot \mathbf{E}_{\text{learned}}^{\text{patch}}
\end{equation}
The gating parameter $\lambda$ is implemented as the output of a sigmoid function applied to a raw learnable parameter $\lambda_{raw}$, ensuring it remains within the $[0, 1]$ interval and is optimizable via gradient descent:
\begin{equation}
	\lambda = \sigma(\lambda_{raw}) = \frac{1}{1 + e^{-\lambda_{raw}}}
\end{equation}
The final positional embedding matrix $\mathbf{E}_{\text{final}}$ is constructed by concatenating the preserved class token embedding with the hybrid patch embeddings. This complete matrix is then added to the patch and class token embeddings. This hybrid approach enables a graceful transfer of knowledge, allowing the model to automatically determine the optimal blend of pre-trained spatial priors and the rich structure of the elliptic function encoding for each specific downstream task.

\end{appendices}

\end{document}